\theoremstyle:=definition,remark,plain\do{%
        \expandafter\g@addto@macro\csname th@\theoremstyle\endcsname{%
            \addtolength\thm@preskip\parskip
            }%
        }
\definecolor{dnrbl}{rgb}{0,0,0.3}
\definecolor{dnrgr}{rgb}{0,0.3,0}
\definecolor{dnrre}{rgb}{0.5,0,0}
\theoremstyle{plain}
\newtheorem{thm}{Theorem}[section]
\newtheorem*{conjecture}{Conjecture}
\newtheorem{lem}[thm]{Lemma}
\theoremstyle{definition}
\newtheorem{defi}[thm]{Definition}
\let\c@table\c@figure
\newcommand{\Nat}{\mathbb{N}}
\newcommand{\restr}{\upharpoonright}  
\DeclarePairedDelimiter{\extp}{\mathtt{EXT}(}{)} 
\newcommand{\un}{\uparrow} 
\newcommand{\de}{\downarrow} 
\DeclarePairedDelimiter{\dbra}{\llbracket}{\rrbracket}
\newcommand{\LL}{\mathcal{L}}
\newcommand{\MM}{\mathcal{M}} 
\newcommand{\CC}{\mathcal{C}}
\newcommand{\PP}{\mathcal{P}}
\newcommand{\ml}{Martin-L\"{o}f }
\newcommand{\eg}{e.g.\ }
\newcommand{\ie}{i.e.\ }
\newcommand{\ce}{c.e.\ }
\newcommand{\pf}{prefix-free }
 \newcommand{\VV}{\mathcal V}
 \newcommand{\GG}{\mathcal G} 
\renewenvironment{abstract}
 { \normalsize
  \list{}{
    \setlength{\leftmargin}{.0cm}%
    \setlength{\rightmargin}{\leftmargin}%
    }%
  \item {\bf \abstractname.} \relax}
 {\endlist}
\title{Algorithmic learning of probability distributions from random data in the limit 
\thanks{Barmpalias was supported by the 
1000 Talents Program for Young Scholars from the Chinese Government No.~D1101130.
Partial support was also received from the China Basic Research Program No.~2014CB340302.
Thanks to Fang Nan for comments on an earlier draft.}}
\author{George Barmpalias  \and Frank Stephan}
\date{\today}
\begin{document}
\maketitle
\begin{abstract}
We study the problem of identifying a probability distribution for some 
given randomly sampled data in the limit, in the context of algorithmic learning theory
as proposed recently by Vitanyi and Chater  \cite{VITANYI201713}.
We show that there exists a computable partial learner for the computable 
probability measures, while
by Bienvenu, Monin and Shen \cite{Bienvenu2014} 
it is known that  there is no computable learner for the computable probability  measures.
Our main result is the characterization of the oracles that compute explanatory learners
for the computable (continuous) probability  measures as the high oracles.
This provides an analogue of a well-known result of Adleman and Blum \cite{AdlemanB91}
in the context of learning computable probability distributions.
We also discuss related learning notions such as behaviorally correct learning and other
variations of explanatory learning, in the context of learning probability distributions from data.
\end{abstract}
\vspace*{\fill}
\noindent{\bf George Barmpalias}\\[0.5em]
\noindent
State Key Lab of Computer Science, 
Institute of Software, Chinese Academy of Sciences, Beijing, China.\\[0.2em] 
\textit{E-mail:} \texttt{\textcolor{dnrgr}{barmpalias@gmail.com}}.
\textit{Web:} \texttt{\textcolor{dnrre}{http://barmpalias.net}}\par
\addvspace{\medskipamount}\medskip\medskip

\noindent{\bf Frank Stephan}\\[0.5em]  
Departments of Mathematics and Computer Science,
National University of Singapore,
2 Science Drive 2, Singapore 117543,
Republic of Singapore.\\[0.2em]
\textit{E-mail:} \texttt{\textcolor{dnrgr}{fstephan@comp.nus.edu.sg.}}
\textit{Web:} \texttt{\textcolor{dnrre}{http://www.comp.nus.edu.sg/$\sim$fstephan/}}
\vfill \thispagestyle{empty}
\clearpage

\section{Introduction}\label{A4yxNNSDkW}
We are interested in the following informally stated  general 
problem, which we study in the context of 
formal language identification and algorithmic learning theory:

\begin{equation}\label{v9CcOK3DX6}
\parbox{12cm}{
Given a probability distribution $\PP$ and a sufficiently large sample of
randomly chosen
data from the given distribution, learn or estimate
a probability distribution with respect to which the sample has been randomly sampled.
}
\end{equation}

Problem \eqref{v9CcOK3DX6} has a long history in statistics (\eg see \cite{Vapnik:1982:EDB})
and has more recently been approached in the context of computational learning,  
in particular the probably approximately correct (PAC) learning model, starting with
\cite{Kearns:1994:LDD}. The same problem was recently approached in the context of
 Algorithmic Learning Theory, in the tradition of Gold \cite{GOLD1967447},
 and Kolmogorov complexity
by Vitanyi and Chater in  \cite{VITANYI201713}.\footnote{Probabilistic methods and 
learning concepts in formal language and algorithmic learning theory 
have been studied long before \cite{VITANYI201713},
see \cite{Pitt:1989:PII} and the survey \cite{Ambainis:2001}. However most of this work focuses
on identifying classes of languages or functions using probabilistic strategies, rather than
identifying probability distributions as Problem \eqref{v9CcOK3DX6} asks.
Bienvenu and Monin \cite[Section IV]{Bienvenu:2012:VNB:235} do study a form of 
\eqref{v9CcOK3DX6} through a concept that they call {\em layerwise learnability} of 
probability measures
in the Cantor space,
but this is considerably different than 
Vitanyi and Chater in  \cite{VITANYI201713} and the concepts of
Gold \cite{GOLD1967447}, the most important difference being that
it refers to classes of probability 
measures that are not necessarily contained in the computable
probability measures.}

The learning concepts discussed in  Vitanyi and Chater  \cite{VITANYI201713} are
very similar in nature to the classic concepts of 
algorithmic learning
which are
motivated by the problem of language learning in the limit (see \cite{PINKER1979217})
but they differ in two major ways. 
In the classic setting, one starts with a class of languages or functions which have a finite
description (\eg they are computable) and the problem is to find an algorithm (often called
{\em a learner}) which can infer,
given a sufficiently long text from any language in the given class, or a sufficiently long
segment of the characteristic sequence of any function in the given class, a description
of the language or function in the form of a grammar  
or a program. More precisely, the desired algorithm makes
successive predictions given longer and longer segments of the input sequence, and is required
to converge to a correct grammar or program for the given infinite input.

If we apply the concept of identification in the limit to Problem  \eqref{v9CcOK3DX6}, according to
Vitanyi and Chater  \cite{VITANYI201713}, one starts with a class $\VV$ of finitely describable
probability distributions (say,  the computable measures on the Cantor space) and we have the following differences with respect to the classic setting:
\begin{itemize}
\item the inputs on which the learner is supposed to succeed in the limit are random
sequences with respect to some probability distribution in the given class $\VV$,
and not elements of $\VV$;
\item success of the learner $\LL$ on input $X$ 
means that $\LL(X\restr_n)$ 
converges, as $n\to\infty$, to
a description of some element of $\VV$ with respect to which $X$ is random.
\end{itemize}
First, note that just as in the context of computational learning theory, here too
we need to restrict the probability distributions in Problem \eqref{v9CcOK3DX6}
to a class of `feasible' distributions, which in our case means computable distributions 
in the Cantor space. Second, in order to specify the learning concept we have described,
we need to define what we mean by random inputs $X$ with respect to a computable 
distribution $\PP$ in
the given class $\VV$ on which the learner is asked to succeed. 
Vitanyi and Chater  \cite{VITANYI201713} ask the learner to succeed on every real $X$
which is {\em algorithmically random}, in the sense of \ml \cite{MR0223179}, with respect to some computable probability measure.\footnote{From this point on we will use the term 
(probability) {\em measure} instead of distribution, since the literature in algorithmic randomness
that we are going to use is mostly written in this terminology.} Then the interpretation of 
Problem  \eqref{v9CcOK3DX6} through the lenses of algorithmic learning theory
and in particular the ideas of Vitanyi and Chater  \cite{VITANYI201713} is as follows:
\begin{equation}\label{sGTC9dqWCS}
\parbox{12cm}{Given a computable measure $\mu$ and an
algorithmically random stream $X$ with respect to $\mu$,
learn in the limit (by reading the initial segments of $X$) a
computable measure $\mu'$ with respect to which $X$ is
algorithmically random.}
\end{equation}
This formulation invites many different formalizations of learning concepts
which are parallel to the classic theory of algorithmic learning\footnote{EX-learning, BC-learning,
BC$^{\ast}$-learning etc. In Odifreddi \cite[Chapter VII.5]{Odifreddi:99} the reader can find
a concise and accessible introduction to these basic learning concepts and results.},
and although we will comment on some of them later on, this article is
specifically concerned with EX-learning ({\em explanatory learning}, one of the main
concepts in Gold \cite{GOLD1967447}), which means
that in \eqref{sGTC9dqWCS} we require the learner to eventually converge to a specific
description of the computable measure\footnote{as opposed to, for example,
eventually giving different indices of the same  
measure, or even different measures all of which satisfy the required properties.}
 with the required properties.
 
Formally, a {\em learner} is a computable function $\LL$ from the set of binary strings $2^{<\omega}$ to
$2^{<\omega}$. 

\begin{defi}[Success of learners on measures]\label{pRSkquPLEN}
A learner $\LL$, $EX$-succeeds on a measure $\mu$ if for every $\mu$-random real $X$
the limit of $\LL(X\restr_n)$ as $n\to\infty$ exists and is an index of a 
computable measure $\nu$ such that
$X$ is $\nu$-random.
\end{defi}

Vitanyi and Chater  \cite{VITANYI201713} observed 
for any uniformly computable class $\CC$ of computable measures
there exists a computable learner which
is successful on all of them, in the sense that it correctly guesses appropriate
computable measures for every stream which is $\mu$-random with respect to some 
$\mu\in\CC$. Then Bienvenu, Monin and Shen \cite{Bienvenu2014}
 showed that the class of computable measures is not learnable in this way.
This result can be viewed as an analogue of the classic theorem in 
 Gold \cite{GOLD1967447} that the class of computable functions is not EX-learnable.
In relation to the latter, Adleman and Blum \cite{AdlemanB91} showed that
the the oracles that EX-learn all computable functions are exactly the oracles $A$
whose jump computes the jump of the halting problem ($\emptyset''\leq_T A'$), 
\ie the oracles that can decide in the limit
the totality of partial computable functions.

In this article we show that  an oracle $A$ can learn the class of computable measures
(in the sense that it computes the required learner) if and only if it is high, \ie $A'\geq_T \emptyset'$.
We prove the following form of this statement, taking into account the characterization of
high oracles from  Martin \cite{Martinhigh} as the ones that can compute a function which dominates
all computable functions.

\begin{thm}\label{UejsgfHHK}
If a function dominates all computable functions, then 
it computes a learner which $EX$-succeeds on all computable measures.
Conversely, if a learner $EX$-succeeds on all computable (continuous)  
measures then it computes
a function which dominates all computable functions.
\end{thm}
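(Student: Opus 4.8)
The plan is to use the dominating function purely as a ``clock''. Fix an effective listing $(\hat\mu_e)_e$ of all partial computable maps $\twolo\times\Nat\to\Rat$, viewed as candidate Cauchy-name approximations of measures, so that ``$e$ is an index of a computable measure'' --- asserting that $\hat\mu_e$ is total, effectively Cauchy, with limit a measure $\mu_e$ --- is a $\Pi^0_2$ property. Writing $\mathrm{K}$ for prefix-free Kolmogorov complexity, set $W_e=\bigcup\{[\sigma]:\mathrm{K}(\sigma)<-\log_2\hat\mu_e(\sigma)-e\}$, a uniformly c.e.\ open set; by the Levin--Schnorr theorem $X$ is $\mu_e$-random if and only if $\mathrm{K}(X\restr_n)\ge-\log_2\mu_e(X\restr_n)-c$ for all $n$ and some constant $c$, so when $e$ is a genuine index the condition $X\notin W_e$ (take $c=e$) forces $X$ to be $\mu_e$-random. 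Given $f$ dominating all computable functions, I define $\LL^f(X\restr_n)$ to be the least $e\le n$ such that, within $f(n)$ steps of computation, (i) the stage-$n$ approximation to ``$e$ is an index of a computable measure, up to precision $n$'' has converged and its approximate Cauchy and measure-axiom conditions hold, and (ii) no prefix of $X\restr_n$ has been enumerated into $W_e$; output $0$ if there is no such $e$.

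To verify that $\LL^f$ EX-succeeds on an arbitrary computable measure $\mu$, fix a $\mu$-random $X$, pick $c$ with $\mathrm{K}(X\restr_n)\ge-\log_2\mu(X\restr_n)-c$ for all $n$, and choose an index $e_1$ of $\mu$ with $e_1\ge c$; then $X\notin W_{e_1}$, so $e^\ast:=\min\{e:\text{$e$ a genuine index and }X\notin W_e\}$ exists and is $\le e_1$. For each fixed $e$, clause (i) stabilises as $n\to\infty$ to the truth value of ``$e$ is a genuine index'': if it holds, the time to confirm the approximation to precision $n$ is a computable function of $n$ and is therefore eventually bounded by $f(n)$ --- this is the sole use of domination --- while if it fails a finite witness of failure is eventually seen; clause (ii) stabilises to the truth value of ``$X\notin W_e$'' because $f$ is unbounded. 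Hence for all large $n$ the least $e\le n$ passing both clauses is $e^\ast$, and $e^\ast$, being a genuine index with $X\notin W_{e^\ast}$, satisfies that $X$ is $\mu_{e^\ast}$-random; so $\lim_n\LL^f(X\restr_n)=e^\ast$ is a correct guess.

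\textbf{Converse direction.} It is enough to show that any oracle $A$ supporting a learner $\Phi^A$ that EX-succeeds on all computable continuous measures is high, i.e.\ $\emptyset''\le_T A'$; by Martin's theorem this is exactly the assertion that $A$ computes a function dominating all computable functions, and equivalently I will arrange that $A'$ decides the $\Pi^0_2$-complete set $\{e:\varphi_e\text{ total}\}$. The forward construction indicates where the difficulty for a learner lies: it must eventually discard indices that are not indices of computable measures, a $\Pi^0_2$ chore, using nothing beyond its own oracle. To exploit this, attach to each $e$ a computable \emph{continuous} measure $\mu_e$, drawn from a uniformly computable family, so designed that the limiting guess of $\Phi^A$ on $\mu_e$-random reals records whether $\varphi_e$ is total: for instance arrange $\mu_e$ to equal Lebesgue measure exactly when $\varphi_e$ diverges and otherwise to switch, at a length governed by the convergence of $\varphi_e$, to a measure singular with respect to Lebesgue (a suitable Bernoulli measure), so that a $\mu_e$-random real is Lebesgue-random precisely in the divergent case. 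Since $\Phi^A$ must converge, on every such real, to an index of a measure for which the real is random, and a Lebesgue-random real is random for no Lebesgue-singular measure, the bit we want is encoded in the limit, and simulating $\Phi^A$ with oracle $A$ while waiting for the guess to settle extracts it. Computability of $\mu_e$ in the face of a possibly divergent $\varphi_e$ is secured by smearing the switch over a long, effectively located block of coordinates, after which the branching identities and non-atomicity are checked directly.

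The main obstacle lies in this converse, and it is twofold. First one must block the degenerate possibility that $\Phi^A$ converges to an index of $\mu_e$ itself --- always a correct guess --- before anything about $\varphi_e$ has surfaced; this is exactly the difficulty met by Bienvenu, Monin and Shen in proving that no computable learner succeeds, and it is handled, as in the converse half of the Adleman--Blum characterization for functions, by making the members of the family pairwise indistinguishable up to a length tied to the convergence time of $\varphi_e$, so that no index is ``correct for the right reason'' any sooner and $\Phi^A$ is forced to wait; it is precisely the tension between this unavoidable delay and the purely computable construction of the $\mu_e$ that pins the threshold at highness rather than anything lower. Second, one assembles the uniform-in-$e$, $A$-computable limiting decision of totality into a single function dominating all computable functions, which is routine via the standard limit-lemma and $\Pi^0_2$-completeness arguments.
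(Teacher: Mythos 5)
Your forward direction is essentially the same argument as the paper's: the paper phrases the high oracle as a $\Delta^0_2$ approximation to totality and has the learner minimize the cost $d_i(\sigma)+i$ over indices currently believed total, whereas you use the dominating function directly as a clock and select the least $e$ for which the deficiency threshold $d_e \leq e$ is met. Both are the same Levin--Schnorr-deficiency selection with a totality filter, and the small monotonicity worries (stability of clause~(ii) once a prefix enters $W_e$, and of clause~(i) once a divergence or axiom violation is seen) are handled by the observation that a dominating $f$ dominates constants and the relevant time-complexity functions. That part is fine.

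The converse direction, however, has a genuine gap. Your plan is to encode ``$\varphi_e$ total'' into whether a uniformly computable continuous measure $\mu_e$ is Lebesgue-like or Lebesgue-singular, then run $\Phi^A$ on a $\mu_e$-random real and ``extract the bit from the limit.'' But the learner's limit is merely an index $j$ of some computable measure with respect to which that real is random; in particular, nothing prevents $\Phi^A$ from always converging to an index of $\mu_e$ itself, which is a correct answer for every $e$ and carries no usable information. Deciding from $j$ whether $\mu_j$ is singular or absolutely continuous with respect to Lebesgue (equivalently, whether the sampled real is Lebesgue-random) is exactly as hard as the totality question you started with, and you give no mechanism for it. You acknowledge this ``degenerate possibility'' and appeal to an Adleman--Blum-style indistinguishability trick, but that device serves a different purpose: in the function-learning setting it forces the learner to converge \emph{late}, so that the convergence point itself yields a dominating function. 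It does not rescue a bit-extraction scheme, and you never reduce to extracting a dominating function from convergence times. You would also need a uniform ($A$-effective) way to produce, for each $e$, a $\mu_e$-random real to feed the learner, which the sketch leaves open. The paper's actual argument avoids bit-encoding entirely: after normalizing to a \emph{strong} learner that also outputs a deficiency bound (Lemma~\ref{NhUZ5ZOtU5}), it constructs for each computable $h$ a $p$-sparse computable measure $\lambda_h$ and a $\lambda_h$-full $\Pi^0_1$ class $\CC$ such that any index $i$ guessed with small deficiency on $X\in\CC$ has time-complexity dominating $h$ (Lemma~\ref{ADVYKUBE4x}); it then builds $f\leq_T\LL$ from the stabilization pattern of $\LL$'s guesses, using the sparseness of $\lambda_h$ so that the sets $D_t$ of ``reals where we gave up waiting'' can be bounded in $\lambda_h$-measure by counting in the uniform measure without knowing $\lambda_h$. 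This measure-theoretic domination argument is the core technical content of the theorem and is absent from your proposal.
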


This provides an analogue of the result of Adleman and Blum \cite{AdlemanB91} in the context
of learning of probability measures, and is the first oracle result in this topic. In particular,
it shows that the computational power required for learning all computable reals
in the sense of Gold \cite{GOLD1967447} (identification by explanation)
is the same power that is required for learning all computable measures in
the framework of Vitanyi and Chater  \cite{VITANYI201713}.
Our methods
differ from  Bienvenu, Monin and Shen \cite{Bienvenu2014}, and borrow some ideas from  
Adleman and Blum \cite{AdlemanB91}, but in the context of sets of positive measure instead of
reals. Moreover our arguments show a stronger version of Theorem \ref{UejsgfHHK},
which we detail in Section \ref{of5zJimRlx}, and which roughly says that the theorem holds
also for fixed positive probability of $EX$-success on all measures.

On the positive side, 
Osherson, Stob and Weinstein \cite{STL1st} introduced the notion of partial learning
for computable sequences, and showed that there is a 
computable learner which partially learns all computable binary sequences.
We introduce the corresponding notion for measures and show an analogous result.

We say that a learner $\LL$ {\em partially succeeds} on measure $\mu$ if
for all $\mu$-random $X$ there exists a $j_0$ such that
\begin{itemize}
\item there are infinitely many $n$ with $\LL(X\restr_n)=j_0$;
\item if $j\neq j_0$ then there are only finitely  many $n$ with $\LL(X\restr_n)=j$;
\item $\mu_{j_0}$ is a computable measure such that $X$ is $\mu_{j_0}$-random.
\end{itemize}

\begin{thm}\label{8m2M82eMhm}
There exists a computable learner which partially succeeds on all computable measures.
\end{thm}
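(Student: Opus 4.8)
The plan is to set up a uniformly computable bookkeeping that, in the limit, recognizes the computable measures relative to which $X$ is random, and then feed this into a priority-style amalgamation of the kind used by Osherson, Stob and Weinstein \cite{STL1st} to partially identify the computable sequences.

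Concretely, fix a uniformly computable enumeration $(\nu_e)$ of all partial computable schemes assigning rationals to cylinders, arranged so that every computable measure is computed by some $e$ on which $\nu_e$ is total and obeys the measure axioms; the string the learner outputs will just be such an index. For each $e$ and stage $s$ I extract, by dovetailing $s$ steps of $\nu_e$ against the bits of $X\restr_s$, two quantities: a \emph{confidence} $m_e(s)$ --- the largest $m\le s$ for which $\nu_e$ has converged on all strings of length $\le m$ and has not yet contradicted the (approximate) measure axioms among them --- and, using only that verified part of $\nu_e$ together with the prefix $X\restr_{m_e(s)}$, the depth $d_e(s)$ to which $X$ has currently sunk into the universal $\nu_e$-Martin-L\"of test. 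If $\nu_e$ is a genuine total computable measure then $m_e(s)\to\infty$, and $d_e(s)$ (which is non-decreasing in $s$) stays bounded precisely when $X$ is $\nu_e$-random; if $\nu_e$ is partial, or eventually contradicts the axioms, then $m_e(s)$ is bounded and sets a new record only finitely often. I would then call $e$ \emph{promising at stage $s$} when $m_e(s)$ beats all its earlier values --- so that partial and inconsistent schemes are promising only finitely often, while genuine measure schemes are promising infinitely often --- and attach to those stages the current value of $d_e(s)$; thus $e$ names a computable measure making $X$ random exactly when $e$ is promising infinitely often and the associated $d_e$-values stay bounded. Since $X$ is $\mu$-random and $\mu$ is computable, at least one such $e$ exists; let $e^\ast$ be the least one.

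It then suffices to exhibit a computable learner that, from the effectively available data $(m_e(s),d_e(s))_{e\le s}$, outputs $e^\ast$ at infinitely many stages and each $j\neq e^\ast$ at only finitely many stages --- for then $\nu_{e^\ast}$ is a computable measure, $X$ is $\nu_{e^\ast}$-random, and all three clauses of the definition of partial success hold with $j_0=e^\ast$. This is exactly the abstract content of the Osherson--Stob--Weinstein argument, with the role played there by the length of agreement between the $e$-th program and the input now played by the promising record-stages of $m_e$ carrying the bounded/unbounded $d_e$-signal, and I would adapt their construction: maintain a current guess with a provisional bound on its $d$-value, keep re-outputting it as long as that bound is respected at promising stages, and on a sparse schedule of stages sweep the lower indices, descending to the least one that currently looks at least as good under a freshly raised bound and otherwise climbing back with the bound increased. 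Each $e<e^\ast$ is either non-promising cofinitely often or has unbounded $d_e$, so it is discarded after finitely many outputs and the guess stays $\ge e^\ast$ thereafter; as the bounds increase without limit, $e^\ast$ is re-certified and re-output infinitely often; and any good $j>e^\ast$ is overtaken by $e^\ast$ after finitely many outputs.

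The main obstacle is precisely this amalgamation. One cannot simply converge to $e^\ast$: that is ruled out by Bienvenu, Monin and Shen \cite{Bienvenu2014}. So the sweeping schedule has to be tuned so that $e^\ast$ genuinely recurs without permitting a rival good index to recur as well, and so that an index below $e^\ast$ whose $d$-value plateaus for longer and longer --- or whose confidence grows ever more sluggishly --- is still eventually dropped; threading this needle is the delicate point inherited from \cite{STL1st}. The remaining work --- justifying the limit characterization of ``computable measure relative to which $X$ is random'' in terms of $(m_e,d_e)$, and checking that partial or axiom-violating schemes contribute only finitely many outputs --- is routine given the standard approximation theory of Martin-L\"of tests relative to computable measures.
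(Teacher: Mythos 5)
Your bookkeeping layer is essentially the paper's: your ``confidence'' $m_e(s)$ record-stages are exactly the $i$-expansionary stages used in the paper, and the deficiency quantity $d_e(s)$ plays the same role there. So the raw ingredients match. But you have left the amalgamation --- the step you yourself flag as ``the delicate point'' --- genuinely unresolved, and that is precisely where the paper's construction earns its keep. Two specific things are missing.

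First, you define $e^\ast$ as the least index of a total measure under which $X$ is random, and you plan to chase it by raising a provisional deficiency bound over time. This is exactly what creates the ``plateauing'' problem you describe: a bad $e<e^\ast$ with $\nu_e$ total but $X$ not $\nu_e$-random is promising infinitely often, and since your bound must tend to infinity (else $e^\ast$ itself is never certified), such an $e$ can clear the current bound at infinitely many sweep stages and thus be output infinitely often. The paper sidesteps this entirely by never raising a bound: the eligibility criterion at stage $s$ is $d_i(\sigma)[s]\le i$, a \emph{fixed} bound tied to the index itself. The padding lemma guarantees that some index $i$ with $\mu_i$ total, $X$ $\mu_i$-random, and $\mathbf{d}_i(X)\le i$ exists, and a bad total $\nu_e$ under which $X$ is non-random eventually fails $d_e\le e$ permanently. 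Your proposal, which reaches for padding nowhere, has no replacement for this.

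Second, even granting a correct target index, you need the learner's output to take \emph{exactly one} value infinitely often; but if you emit the raw index each time the current guess is re-certified, nothing stops a second good index from recurring, or (with the sweeping schedule) a discarded index from being revisited on each sweep. The paper handles uniqueness by outputting a padded index $p(i,j)$, where $j$ is chosen to exceed every $k$-expansionary stage with small $d_k$-deficiency for $k<i$ seen so far. Thus if $p(i,j)$ recurs infinitely often, the fixed value of $j$ forces the lower-index activity to be finite, and it also rules out any $p(i',j')$ with $i'>i$ recurring (at those stages $i$ would have to be ineligible, contradicting the infinitely many $i$-expansionary stages with $d_i\le i$). This single move does the work you are trying to do with ``sweeping'' and ``climbing back,'' and it is the piece of the proof that you have not supplied. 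Without the $d_i\le i$ criterion and the history-recording padding, the plan as written does not close.
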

We give the proof of Theorem \ref{8m2M82eMhm} in Section \ref{r9nX2v56yb}.

Behaviorally correct learning, or BC-learning, is another standard notion
in algorithmic learning theory, and requires that for all computable $X$,
there exists some $n_0$ such that for all $n>n_0$ the learner on $X\restr_n$ predicts 
an index of a computable function with characteristic sequence $X$
(instead of converging to a single such index as in explanatory learning).
Bienvenu, Figueira, Monin and  Shen \cite{Bienvealgindpro} considered the
analogue of BC-learning for measures and showed that
there exists no computable learner which BC learns all computable measures.
They also considered the analogue of $BC^{\ast}$-learning (which is the same as BC but
ignoring finite differences of the functions) for measures and showed that
there exists a computable learner which $BC^{\ast}$-learns all computable measures, hence
giving an analogue of a theorem of Harrington  
who proved the same in the classical setting.\footnote{Harrington's result is
reported in \cite{CASE1983193}. The computable learner $\LL$ with the stated property, given
$\sigma$ outputs an index of the following program, where $(\varphi)$ is a standard list of all
partial computable functions:
on input $n$, search for the least $e\leq n$ such that $\varphi_e[n]$ extends $\sigma$;
if such exists, output $\varphi_e(n)$; otherwise output 0. It is not hard to see that this learner
has the required properties.} 
In Section \ref{of5zJimRlx} we discuss additional facts about EX and BC learning that one
may try to establish using the methods developed in the present paper.

\section{Background facts and the easier proofs}
Consider the Cantor space $2^{\omega}$, which is the set of all infinite binary sequences
which we call {\em reals}. This is a topological space generated 
by the basic open sets $\dbra{\sigma}=\{\sigma\ast X\ |\  X\in 2^{\omega}\}$
for all binary strings $\sigma$, where $\ast$ denotes concatenation.
Then the open sets can be represented by sets of strings $Q$ and we use
$\dbra{Q}$ to denote the set of reals which have a prefix in $Q$.
We may identify each Borel probability measure on  $2^{\omega}$
by its measure representation, \ie a function $\mu: 2^{<\omega}\to [0,1]$ (determining 
its values on the basic open sets) with the property 
$\mu(\sigma)=\mu(\sigma\ast 0)+\mu(\sigma\ast 1)$ for each $\sigma\in 2^{<\omega}$,
which maps the empty string to 1. Given  set of strings $C$, we let $\mu(C)$ denote
the measure of the corresponding open set in the Cantor space, which  equals
$\sum_{\sigma\in C} \mu(\sigma)$ in the particular case that $C$ is prefix-free.

Let us fix a
universal enumeration $(\mu_i)$ of all {\em partial computable measures} 
which we view as the partial computable functions $\mu$ with the property
that (a) they are defined on the empty string and equal 1, and 
(b) for all $\sigma$, if $\mu(\sigma\ast i)\de$ for some $i\in\{0,1\}$ then
$\mu(\rho)\de$ for all strings of length at most $|\sigma|+1$
and $\mu(\sigma)=\mu(\sigma\ast 0)+\mu(\sigma\ast 1)$.
Then clearly $(\mu_i)$ contains all computable measures.
We use the suffix `$[s]$' to denote the state of an object after $s$ steps of computation. 
Given a \pf set $C$ of strings and $i$, we say that $\mu_i(C)[s]\de$ if
$\mu_i(\sigma)[s]\de$ for all $\sigma\in C$.
Without loss of generality, in our universal enumeration $(\mu_i)$ we assume that 
if $\mu_i(\sigma)[s]\de$ then $\mu_i(\tau)[s]\de$ for all strings $\tau$ of length at most $|\sigma|$.
Each $\mu_i$ has a {\em time-complexity} (possibly partial) function
which maps each $n$ to the least stage $s$ such that $\mu_i(2^{\leq n})[s]\de$.
Generally speaking we are interested in {\em continuous measures} \ie measures $\mu$ such
that $\mu(\{X\})=0$ for each real $X$. 

\subsection{Algorithmic randomness with respect to computable measures}\label{GY8YN2nY9W}
Bienvenu and Merkle \cite{BIENVENU2009238} 
contains an informative presentation of
algorithmic randomness with respect to computable measures.
Here we recall the basic concepts and facts on this topic that are directly related to
our arguments.
A \ml $\mu$-test is a uniformly computably enumerable sequence $(U_i)$ of sets of strings
such that $\mu(U_i)\leq 2^{-i}$ for each $i$. We say that $X$ is $\mu$-random for
a computable measure $\mu$ if $X\not\in \cap_i \dbra{U_i}$ for all \ml $\mu$-tests
$(U_i)$. In the case where a computable measure $\mu$ is continuous (\ie it does not have
atoms) the theory of \ml $\mu$-randomness is entirely similar to the theory of \ml randomness
with respect to the uniform measure. For example, by Levin 
\cite{leviniandc/Levin84} we have the following characterization in terms of the
\pf initial segment Kolmogorov complexity $n\mapsto K(X\restr_n)$:
\begin{equation}
\parbox{10cm}{Given a computable measure $\mu$, a real 
$X$ is \ml $\mu$-random if and only if 
$\exists c\ \forall n\ K(X\restr_n)\geq -\log \mu(X\restr_n)-c$.}
\end{equation}
An important concept for the proof of both of the clauses of
Theorem \ref{UejsgfHHK} is the randomness deficiency of a real $X$
with respect to a computable measure $\mu$. There are different definitions
of this notion, but most of them are effectively equivalent (in a way to be made precise
in the following) and are based on the same intuition: 
\begin{itemize}
\item $X$ is $\mu$-random if and only if it has bounded (\ie finite) $\mu$-randomness deficiency;
\item the more  $\mu$-randomness deficiency $X$ has, the less $\mu$-random $X$ is.
\end{itemize}

It will help the uniformity of our treatment to deal with the partial computable measures
and regard totality as a special case. In this respect,
we define the {\em randomness deficiency functions} as a uniform
sequence of partial computable functions $(d_e)$ corresponding to $(\mu_e)$ as follows:
\[
d_e(\sigma)=-\log \mu_e(\sigma)-K(\sigma)
\hspace{0.5cm}\textrm{for each $e,\sigma$.}
\]
Then we can also define the
randomness deficiency functions on reals as the sequence $(\mathbf{d}_e)$
defined as:
\[
\mathbf{d}_e(X)=\sup_n d_e(X\restr_n)
\hspace{0.5cm}\textrm{for each $e,X$}
\]
where the supremum is taken over the $n$ such that 
$d_e(X\restr_n)\de$ (hence, at least $n=0$).
In this way, the $\mu$-randomness deficiency 
of $\sigma$ if the amount that $\sigma$ can be compressed by
the underlying universal machine, compared to its default
code-length $-\log \mu_e(\sigma)$ which is chosen according to $\mu_e$.
Similarly, the $\mu$-randomness deficiency of $X$ is the maximum
amount by which the initial segments of $X$ are compressible.

Alternatively, we could have defined the $\mu$-randomness deficiency
of $X$ as the least $i$ such that $X\not\in \dbra{U_i}$, where $(U_i)$
is a fixed universal \ml $\mu$-test; this definition can also be made 
uniform in the indices of the partial computable measures.
The intuition behind this alternative deficiency notion is that
effectively producing a $\mu$-small open neighborhood of $X$ increases the
randomness deficiency of $X$. This interpretation will be crucial in Section \ref{cTQXDYoKpF},
and we will later observe that it is essentially equivalent to the definition 
of $\mathbf{d}_e(X)$ that we gave in terms of Kolmogorov complexity.

\subsection{Proof of the first clause of Theorem \ref{UejsgfHHK} (easier)}
Recall that for any oracle $A$ we have $A'\geq_T \emptyset''$
if and only if $\Pi^0_2\subseteq\Delta^0_2(A)$, which means that
the answers to any uniformly $\Pi^0_2$ sequence of questions, such as
the totality of partial computable functions, can be approximated  by a function which 
is computable in $A$. Hence, also in view of the domination result of Martin mentioned earlier,
the first clause of  Theorem \ref{UejsgfHHK}  can be stated as follows:
\begin{equation}\label{RGgJuzFZpA}
\parbox{13cm}{Let $f$ be a function with binary values such that for each $e$ we have
$\lim_s f(e,s)=1$, if $\mu_e$ is total and $\lim_s f(e,s)=0$ if $\mu_e$ is partial.
Then there exists a learner $\LL$ which is computable in $f$ and which
$EX$-succeeds on all computable measures.}
\end{equation}
The main idea is to first observe that given a uniformly computable
sequence $(\lambda_i)$ of total measures, we can 
define the randomness deficiency functions $(d_i)$ as in Section \ref{GY8YN2nY9W}
(but with respect to $(\lambda_i)$ instead of $(\mu_i)$) and
these will be total. Hence  we can define
the computable learner 
which monitors the deficiencies along each real $X$ and at each step $n$
predicts the index $i$ which minimizes the cost $d_i(X\restr_n)[n]+i$.
It is easy to check that this learner succeeds 
in all of the measures $(\lambda_i)$.\footnote{Since Definition 
\ref{pRSkquPLEN} refers to the universal indexing $(\mu_i)$,
at this point the reader may be concerned with the
difference in the indexing $(\lambda_i)$. However this is not an issue,
since there is a computable map from the
indices in special list $(\lambda_i)$ to the corresponding items in
the universal list $(\mu_i)$ that we fixed.}
Second, with an oracle which decides totality of partial computable functions {\em in the limit},
we can implement a similar learner for the universal list $(\mu_i)$, 
by eventually identifying and ignoring the partial members of 
$(\mu_i)$ in our calculations of the costs $d_i(X\restr_n)[n]+i$.

Formally, for each $\sigma$ define 
\[
\texttt{cost}(\sigma,e)[s]=
e+\max \{d_e(\sigma\restr_{n})[s]\ |\ n\leq |\sigma|\ \wedge\ d_e(\sigma\restr_{n})[s]\de\}
\]
where the maximum of the empty set is defined by default to be 0.
Then for each $\sigma$
\[
\LL(\sigma) = \min \big\{i\leq |\sigma| \ |\ 
f(i,|\sigma|)=1 \wedge\ 
\texttt{cost}(\sigma,i)[|\sigma|]=\min_{j\leq |\sigma|} \texttt{cost}(\sigma,j)[|\sigma|]\big\}  
\]
\ie $\LL(\sigma)$ is the least $i\leq |\sigma|$ which minimizes 
the cost of $\sigma$.

Clearly $\LL$ is computable in $f$, so it remains to show that for every $X$ which 
is $\mu_j$-random for some total $\mu_j$, the limit $\lim_n \LL(X\restr_n)$ exists
and is a number $i$ such that $\mu_i$ is total and $X$ is $\mu_i$-random.
Consider the least number $e$ which minimizes the expression
$\mathbf{d}_e(X)+e$
amongst the indices of total computable measures.
It remains to show that 
$\lim_n \LL(X\restr_n)=e$.
Note that by our hypothesis about $X$, $\mathbf{d}_e(X)+e$ is a finite number.
Let $s_0$ be a stage such that 
\begin{enumerate}[\hspace{0.5cm}(a)]
\item $f(j,s)=\lim_t f(j,t)$ for all $j\leq \mathbf{d}_e(X)+e$
\item $j+\max_{i\leq s} d_j(X\restr_i)[s]\geq\max_{i\leq s} d_e(X\restr_s)[s]+e$
for all $s\geq s_0$ and $j\leq \mathbf{d}_e(X)+e$ such that $\mu_j$ is total;
\item $j+\max_{i\leq s} d_j(X\restr_i)[s]>\max_{i\leq s} d_e(X\restr_s)[s]+e$
for all $s\geq s_0$ and $j<e$ such that $\mu_j$ is total.
\end{enumerate}

By the choice of $X$ and the definition of $\mathbf{d}_e(X)$
we have $\LL(X\restr_n)\leq \mathbf{d}_e(X)+e$ for all $n$.
If $\LL(X\restr_n)=j$ for some $n>s_0$, then by the choice of $s_0$
the measure $\mu_j$ is total, so by clause (b)  above,
and the minimality in the definition of $\LL$ on $X\restr_n$, we must have $j\leq e$.
Then by clause (c) and the definition of $\LL$ it is not possible that $j< e$, so $j=e$
and this shows that  $\LL(X\restr_n)=e$ for all $n>s_0$, 
which concludes the proof of \eqref{RGgJuzFZpA}.

\subsection{Proof of  Theorem \ref{8m2M82eMhm} about partial learning}\label{r9nX2v56yb}
Let $\ell_i[s]$ be the largest number $\ell$ such that $\mu_i(2^{\leq\ell})[s]\de$.
A stage $s$ is called $i$-expansionary if 
$\ell_i[t]<\ell_i[s]$ for all  $i$-expansionary stages $t<s$.
By the padding lemma let $p$ be a computable function such that for each $i,j$ 
we have $\mu_{p(i,j)}\simeq \mu_{i}$ and $p(i,j)<p(i,j+1)$.

At stage $s$, we define $\LL(\sigma)$ for each $\sigma$ of length $s$ as follows.
For the definition of  $\LL(\sigma)$ find the least $i$ such that $s$ is $i$-expansionary and
$d_i(\sigma)[s]\leq i$. Then let $j$ be the least
such that $p(i,j)$ is larger than any $k$-expansionary stage $t<|\sigma|$ for any $k<i$
such that $d_k(\sigma\restr_k)[t]\leq k$,
and define $\LL(\sigma_t)=p(i,j)$.

Let $X$ be a real. Note that $\LL(X\restr_n)=x$ for infinitely many $n$, then
$x=p(i,j)$ for some $i,j$, which means that $\mu_i=\mu_x$ is total
and there are infinitely many $x$-expansionary stages as well as infinitely many
$i$-expansionary stages.
This implies that there are at most $x$ many 
$y$-expansionary stages $t$ for any
$y<x$ with $d_y(\sigma\restr_y)[t]\leq y$. Moreover for each $z>x$
there are at most finitely may $n$ such that $\LL(X\restr_n)=z$. Indeed, for each $z$
if $n_0$ is an $i$-expansionary stage then $\LL(X\restr_n)\neq z$ for all $n>n_0$.
Moreover if  $\LL(X\restr_n)=x$ for infinitely many $n$, then 
$d_x(X)=d_i(X)\leq i$ and $\mu_i$ is total, so $X$ is $\mu_i$-random.
We have shown that for each $X$ there exists at most one $x$ such that
$\LL(X\restr_n)=x$ for infinitely many $n$, and in this case $\mu_x$ is total
and $X$ is $\mu_x$-random.

It remains to show that if $X$ is $\mu$-random for some computable $\mu$, then
there exists some $x$ such that $\LL(X\restr_n)=x$ for infinitely many $n$.
If $X$ is $\mu_i$-random for some $i$ such that $\mu_i$ is total,
let $i$ be the least such number 
with the additional property that $\mathbf{d}_i(X)\leq i$ (which exists by the padding lemma).
Also let $j$ be the least number such that 
$p(i,j)$ is larger than any stage $t$ which is 
$k$-expansionary for any $k<i$
with $d_k(\sigma\restr_k)[t]\leq k$. Then the construction will define
$\LL(X\restr_n)=p(i,j)$ for each $i$-expansionary stage $n$ after
the last $k$-expansionary stage $t$ for any $k<i$
with $d_k(\sigma\restr_k)[t]\leq k$.
We have shown that $\LL$ partially succeeds on every $\mu$-random $X$ for any
computable measure $\mu$.

\section{Proof of the second clause of Theorem \ref{UejsgfHHK} (harder)}\label{cTQXDYoKpF}
In order to make $\LL$ compute a function $f$ which dominates every computable function, 
the idea is to use the convergence times of the current guesses (\eg for the strings of length $s$)
of $\LL$ in order to produce the large number $f(s)$. The immediate problem is that some of
the current guesses may point to partial measures $\mu_i$, so the search of some convergence
times may be infinitely long. Although we cannot decide at stage $s$ which of these guesses
are such, we know that they are erroneous guesses, and they cannot be maintained
with positive probability, with respect to any computable probability measure $\mu$.
Hence for each such guess $\mu_i$ (on a string $\sigma$ of length $s$) we can wait for either
the convergence of $\mu_i\restr_s$ or the change of the $\LL$-prediction in
a sufficiently `large' set of extensions of $\sigma$.
In order to make this idea work, we would need to argue that 
\begin{equation}\label{W9SmgfZoCl}
\parbox{13cm}{for each computable
function $h$, the failure of $f\leq_T \LL$ to dominate $h$ means that for some computable
measure $\lambda_h$ the learner
$\LL$ fails to give correct predictions in the limit for a 
set of reals of positive $\lambda_h$-measure.}
\end{equation}
In order to make these failures concrete, in 
Section \ref{VUDNFuv7XR} we show that without loss of generality
we may assume that $\LL$ does not only predict a measure $\mu_i$ along each real $X$,
but also an upper bound on the $\mu_i$-randomness deficiency of $X$.
Then the crucial lemma which allows the above argument for the domination of $h$ from 
$f\leq_T$ to succeed is the following
fact, which we prove in Section \ref{9gEizhkpFI}: 
\begin{equation}\label{t5sq9gm2h}
\parbox{14.8cm}{for every computable
function $h$ there exists a computable measure $\lambda_h$ and a $\lambda_h$-large class
of reals $\VV_h$ such that for any $X\in\VV_h$ and $\mu_i$ that may be the suggested hypothesis
of $\LL$ along $X$, either the time-complexity of $\mu_i$ dominates $h$ or $X$ has
large $\mu_i$-deficiency (above the guess of $\LL$).}
\end{equation}
We will also need to make sure that the measure  $\lambda_h$ that we design from $h$
is relatively identical to the uniform measure for all strings  in long intervals of lengths,
compared to the growth of $h$.
This feature will allow us to know how long to wait for the convergence of
$\mu_i\restr_s$ for the guess $\mu_i$ of $\LL$ on some $\sigma$ at stage $s$,
\ie on `how many' reals extending $\sigma$ does $\LL$ have to change its guess
before we give-up on the convergence of $\mu_i\restr_s$. Indeed, although
this size of reals is supposed to be with respect to $\lambda_h$, the definition of $f$ should
not depend directly on $\lambda_h$ since the totality of $h$ and hence of $\lambda_h$ cannot
be determined by $\LL$ at each stage $s$.

Then the crucial positive $\lambda_h$-measure set in the main argument 
\eqref{W9SmgfZoCl} will be a subset of $\VV_h$, namely all the reals in $\VV_h$ 
except for the open sets $\MM_h(s)$ of 
reals for which we did not weight long enough at the various stages $s$
of the definition of $f$, before we give up waiting for the convergence of the
current guesses. The tricky part of the construction of $f$, which we present and verify
in Section \ref{3hp9XGohn}, is to ensure that a positive $\lambda_h$-measure remains in
$\VV_h$, despite the fact that $\lambda$ is not available in the construction in order to 
directly measure the sets $\MM_h(s)$ that need to be removed from $\VV_h$.
 
\subsection{Randomness deficiency and learning}\label{VUDNFuv7XR}
The proof of the second clause of Theorem \ref{UejsgfHHK}
will be based on the fact that 
\begin{equation}\label{rlOqOBSNx}
\parbox{14cm}{if a learner 
learns a computable measure $\mu$ along a real $X$, such that $X$ is
$\mu$-random, then it can also learn an upper bound on the randomness deficiency
of $X$ with respect to $\mu$.}
\end{equation}

We call this notion {\em strong EX-learning along} $X$ and by the padding lemma (the
fact that one can effectively produce arbitrarily large indices of any given computable measure)
we can formulate it as follows.

\begin{defi}[Strong EX-learning]\label{rU3irtSqYN}
Given a class of computable measures $\CC$, a learner $F$ and a real $X$,
we say that the learner strongly EX-succeeds on $X$ if
$\lim_n F(X\restr_n)$ exists and equals an index $i$ of some
$\nu\in\CC$ such that the $\nu$-randomness deficiency of $X$ is bounded above by $i$.
Given $\mu\in\CC$ we say that $F$ strongly $EX$-succeeds on $\mu$ if
it strongly $EX$-succeeds on every
$\mu$-random real $X$.
\end{defi}

Then we can write \eqref{rlOqOBSNx} as follows.
\begin{lem}\label{NhUZ5ZOtU5}
Given a class of computable measures $\CC$ and a learner $F$,
there exists a learner $F^{\ast}$ 
which strongly $EX$-succeeds on every real $X$, on which the given
learner $EX$-succeeds.
\end{lem}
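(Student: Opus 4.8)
The plan is to convert an ordinary $EX$-learner $F$ into a strong one $F^{\ast}$ by augmenting each hypothesis $\mu_i$ that $F$ outputs along $X$ with a provably correct upper bound on the $\mu_i$-randomness deficiency of $X$. The key observation is that $\mathbf{d}_i(X) = \sup_n d_i(X\restr_n)$, where $d_i(\sigma) = -\log\mu_i(\sigma) - K(\sigma)$, so a natural running estimate at stage $s$ along $\sigma = X\restr_s$ is $\max\{d_i(\sigma\restr_n)[s] \mid n\leq|\sigma|,\ d_i(\sigma\restr_n)[s]\de\}$; this quantity is nondecreasing in $s$ and converges (for total $\mu_i$) to $\sup_n d_i(X\restr_n)$ as $s,|\sigma|\to\infty$. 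The difficulty is that this estimate keeps growing, so we cannot simply append it to $i$ and converge. The device, already flagged in Definition \ref{rU3irtSqYN}, is the padding lemma: instead of outputting $i$ together with a bound $b$, we output a padded index $p(i,b)$ of the same measure $\mu_i$, chosen large enough that $p(i,b) \geq b \geq \mathbf{d}_i(X)$; then the index \emph{itself} serves as the deficiency bound, and strong $EX$-success reduces to ordinary convergence of $F^{\ast}$.

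Concretely, I would fix a computable padding function $p$ with $\mu_{p(i,b)} \simeq \mu_i$, $p(i,b) < p(i,b+1)$, and $p(i,b)\geq b$ for all $i,b$. Define $F^{\ast}(\sigma) = p\bigl(F(\sigma),\, B(\sigma)\bigr)$ where $B(\sigma) = \max\{\, d_{F(\sigma)}(\sigma\restr_n)[\,|\sigma|\,] \mid n\leq|\sigma|,\ d_{F(\sigma)}(\sigma\restr_n)[\,|\sigma|\,]\de\,\}$, with the convention that $B(\sigma)=0$ when the set is empty. Since $F$, $d$, $K$, and $p$ are all computable (with the usual finite-stage approximations), $F^{\ast}$ is a computable learner. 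Now suppose $F$ $EX$-succeeds on $X$, say $\lim_n F(X\restr_n) = i$ with $\mu_i$ total and $X$ being $\mu_i$-random. Then there is $n_0$ with $F(X\restr_n) = i$ for all $n\geq n_0$. For $n\geq n_0$ we have $B(X\restr_n) = \max\{d_i(X\restr_m)[n] \mid m\leq n\}$, and since $\mu_i$ is total this is a nondecreasing sequence in $n$ bounded above by $\sup_m d_i(X\restr_m) = \mathbf{d}_i(X)$, which is finite because $X$ is $\mu_i$-random. Hence $B(X\restr_n)$ is eventually constant, equal to some $b^{\ast}$ with $\mathbf{d}_i(X)\leq b^{\ast}$ (in fact equal, once all the relevant $d_i$-values have converged). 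Therefore $F^{\ast}(X\restr_n)$ is eventually constant, equal to $p(i,b^{\ast})$, which is an index of $\mu_i \in \CC$; and the $\mu_i$-randomness deficiency of $X$ is $\mathbf{d}_i(X)\leq b^{\ast}\leq p(i,b^{\ast})$, so the limiting index bounds the deficiency as required.

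The only genuinely delicate point is the monotonicity-plus-boundedness argument for $B(X\restr_n)$: one must check that the stage-approximation $d_i(\cdot)[n]$ interacts correctly with increasing $n$ — increasing $n$ both lengthens the range of prefixes $\sigma\restr_m$ considered and advances the computation stage, so values can only increase — and that for total $\mu_i$ the supremum is attained as a genuine limit rather than merely a limsup, which is where $\mathbf{d}_i(X)<\infty$ (i.e. $\mu$-randomness of $X$) is used. I should also remark that nothing forces $F^{\ast}$ to behave well on reals where $F$ fails, which is fine since the statement only promises success where $F$ already succeeds; and that the switch from $\mathbf{d}_i(X)$ to $b^{\ast}$ (rather than claiming equality) is harmless, since any correct upper bound suffices for Definition \ref{rU3irtSqYN}. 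This completes the reduction of \eqref{rlOqOBSNx} to the padding trick.
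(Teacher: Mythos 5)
Your proof is correct and follows the paper's route exactly: use the padding lemma to append a computable running estimate of the $\mu_{F(\sigma)}$-randomness deficiency to the index $F(\sigma)$, so that the limiting index both names $\mu_i$ and self-reports an upper bound on $\mathbf{d}_i(X)$. The one place you improve on the paper's write-up is the choice of running estimate: the paper pads by $d_{F(\sigma)}(\sigma)[|\sigma|]$ and asserts that $\lim_n d_i(X\restr_n)[n]$ exists, but that quantity can oscillate (already for the uniform measure, $n-K(X\restr_n)$ need not converge even for random $X$), whereas your running maximum $B(\sigma)=\max_{m\leq|\sigma|} d_{F(\sigma)}(\sigma\restr_m)[|\sigma|]$ is nondecreasing and bounded by $\mathbf{d}_i(X)$ on the tail where $F$ has settled, hence genuinely convergent, which is exactly what the argument needs.
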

\begin{proof}
Let $g$ be a computable function such that for each $i,t$
the value $g(i,t)$ is an index of $\mu_i$ and $g(i,t)>t$.
Recall the definition of $d_i(\sigma)[s]$ 
from Section \ref{GY8YN2nY9W}.
Define $F^{\ast}(\sigma)=g(F(\sigma), d(F(\sigma),\sigma)[|\sigma|])$.
Given $X$, suppose that $\lim_n F(X\restr_n)=i$  and $X$ is $\mu_i$-random.
Then $\lim_n d_i(X\restr_n)[n]<\infty$ so $\lim_n F^{\ast}(X\restr_n)$ exists and
is an index $j$ such that $\mu_j=\mu_i$. Moreover by the definition of $g$ we have
that $j> d_j(X)$ which show that $F^{\ast}$ strongly $EX$-succeeds on $X$.
\end{proof}

In the crucial lemma \eqref{t5sq9gm2h} that we prove in the next section,
we will need to increase the randomness deficiency of some reals, which we do
through {\em tests}. Recall we have fixed a certain effective sequence $(\mu_e)$ 
of partial computable measures which includes all computable measures. 
For each $e$ we define a {\em clopen $\mu_e$-test} to be a partial computable
function $i\to D_i$ from integers to finite sets of strings (described explicitly)
with the properties that for each $i$, if $D_i\de$ then $D_j\de$, $\mu_e(D_j)\de$ 
and $\mu_e(D_j)\leq 2^{-j}$ for each $j\leq i$. Note that we incorporate partiality
in the definition of  clopen tests. In this way, there exists an effective universal list
of all clopen $\mu_e$-tests for each $e$, so we may refer to
{\em an index} of a clopen test, in relation the universal list. 
The following fact is folklore.
\begin{equation}\label{EvZSYvwTm6}
\parbox{14cm}{Uniformly in a randomness 
deficiency bound $k$, an index of a partial computable measure $\mu_i$ and
the index of a $\mu_i$-clopen test, we can compute the index of a member of the test
which, if defined, the strings in it have deficiency exceeding $k$.}
\end{equation}
In combination with the recursion theorem, \eqref{EvZSYvwTm6} says that
when we construct a $\mu$-clopen test for some partial computable measure
$\mu$, we can calculate a lower bound on the indices of the members test 
that guarantees sufficiently high 
(larger than the prescribed value $k$) 
$\mu$-randomness deficiency of all of the strings in them.

We state and prove the version of \eqref{EvZSYvwTm6}
which will be used in the argument of Section \ref{9gEizhkpFI}.
A {\em uniform sequence of clopen tests} $(\GG^i)$ is a uniformly 
computable sequence of  clopen tests $\GG^i=(G^i_j)_{j\in\Nat}$
such that for each $i$, $\GG^i$ is a clopen $\mu_i$-test.
As before, since we incorporated partiality in the definition of clopen tests,
each uniform sequence of clopen tests are indexed in a fixed universal enumeration of all
uniform sequences of clopen tests.
The main argument in Section \ref{9gEizhkpFI} will be the construction of a
$\mu_i$-test for each $i$, hence  uniform sequence of clopen tests,  which control
the randomness deficiencies of a set of reals through the following fact.

\begin{lem}[Uniformity of randomness deficiency]\label{LAaax2mHgG}
There is a computable function $q(t,e,k)$ which takes any index $t$ of a
uniform sequence of clopen tests $\GG^i=(G^i_j)$ and any number $k$,
and for each $e$ we have
$d_e(\sigma)>k$ for all $\sigma\in G^e_{q(t,e,k)}$, provided
that $\mu_e(G^e_{q(t,e,k)})$ is defined.
\end{lem}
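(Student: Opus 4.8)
The plan is to prove the uniform form of the folklore fact~\eqref{EvZSYvwTm6}: sitting deep inside a clopen $\mu_e$-test forces large $\mu_e$-randomness deficiency, with a bound computable from the test index $t$, the index $e$, and the threshold $k$. I would build a single bounded request set (Kraft--Chaitin set) $W$ serving all tests at once. Running over all indices $t$ of uniform sequences of clopen tests, all $e$, and all $j$: whenever the finite set $G^e_j$ of the sequence coded by $t$ becomes defined --- at which moment, taking $i=j$ in the definition of a clopen test, $\mu_e(G^e_j)$ and hence every $\mu_e(\sigma)$ with $\sigma\in G^e_j$ is also defined --- put into $W$, for each $\sigma\in G^e_j$ with $\mu_e(\sigma)>0$, the request $(\ell_{t,e,j,\sigma},\sigma)$ with
\[
\ell_{t,e,j,\sigma}=\lceil-\log_2\mu_e(\sigma)\rceil-j+2\lceil\log_2(t+2)\rceil+2\lceil\log_2(e+2)\rceil+2\lceil\log_2(j+2)\rceil .
\]
Since $\sigma\in G^e_j$ gives $\dbra{\sigma}\subseteq\dbra{G^e_j}$, hence $\mu_e(\sigma)\le\mu_e(G^e_j)\le 2^{-j}$, we have $\lceil-\log_2\mu_e(\sigma)\rceil\ge j$, so $\ell_{t,e,j,\sigma}\ge 0$ and the request is legitimate. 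At the outset I would reduce to the case that each $G^e_j$ is prefix-free, replacing it by the uniformly effective set of its prefix-minimal strings; this changes neither $\dbra{G^e_j}$ nor $\mu_e(G^e_j)$, and so preserves the clopen-test property.

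The weight estimate is then mechanical. For fixed $t,e,j$, prefix-freeness gives
\[
\sum_{\sigma\in G^e_j}2^{-\ell_{t,e,j,\sigma}}\ \le\ 2^{\,j}\,2^{-2\lceil\log_2(t+2)\rceil-2\lceil\log_2(e+2)\rceil-2\lceil\log_2(j+2)\rceil}\sum_{\sigma\in G^e_j}\mu_e(\sigma)\ \le\ (t+2)^{-2}(e+2)^{-2}(j+2)^{-2},
\]
and summing over all $t,e,j\ge 0$ bounds the total weight of $W$ by $\big(\sum_{n\ge 0}(n+2)^{-2}\big)^{3}<1$. Thus $W$ is a genuine Kraft--Chaitin set with a fixed index, and the Kraft--Chaitin theorem furnishes a prefix-free machine $M$ with a fixed index, hence a fixed coding constant $c_M$ satisfying $K(\sigma)\le K_M(\sigma)+c_M$ for all $\sigma$. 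Therefore, for any $\sigma\in G^e_j$ for which $\mu_e(G^e_j)$ is defined the request above has been issued, so $K(\sigma)\le\ell_{t,e,j,\sigma}+c_M$, and since $x-\lceil x\rceil\ge-1$,
\[
d_e(\sigma)=-\log_2\mu_e(\sigma)-K(\sigma)\ \ge\ j-1-2\lceil\log_2(t+2)\rceil-2\lceil\log_2(e+2)\rceil-2\lceil\log_2(j+2)\rceil-c_M ;
\]
and if $\mu_e(\sigma)=0$ then $d_e(\sigma)=\infty$ and there is nothing to prove.

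It then suffices to let $q(t,e,k)$ be the least $j$ with $j-1-2\lceil\log_2(j+2)\rceil>k+2\lceil\log_2(t+2)\rceil+2\lceil\log_2(e+2)\rceil+c_M$; the left-hand side tends to infinity with $j$, so this is well defined, and since $c_M$ is a concrete constant the map $(t,e,k)\mapsto q(t,e,k)$ is computable. By the displayed estimate, $d_e(\sigma)>k$ for every $\sigma\in G^e_{q(t,e,k)}$ with $\mu_e(G^e_{q(t,e,k)})$ defined, which is the claim.

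I expect the only genuine difficulty to lie in the bookkeeping behind the two ``fixed'' claims: first, packing the requests for \emph{all} parameters $t,e,j$ into one set of weight $\le 1$, which is precisely why $\ell_{t,e,j,\sigma}$ must carry the logarithmic headers in $t$, $e$ and $j$; and second, that $c_M$ is independent of $t,e,k$, so that defining $q$ in terms of $c_M$ is not circular. Since $W$ is defined with no reference to $q$, the latter is already in order; but a routine appeal to the recursion theorem (as foreshadowed after~\eqref{EvZSYvwTm6}) removes any residual concern and is the form in which the fact gets used when constructing clopen tests in Section~\ref{9gEizhkpFI}.
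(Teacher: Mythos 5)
Your proof is correct and pursues the same basic strategy as the paper's own: build one prefix-free machine (via a Kraft--Chaitin request set) that compresses every string occurring at a sufficiently deep level of any clopen $\mu_e$-test, and read off a deficiency bound from $\mu_e(G^e_j)\le 2^{-j}$. The one genuine divergence is how the self-reference in the coding constant is resolved. The paper sets $q(t,e,k)=t+e+k+x+5$ with $x$ the index of the machine $M$, has $M$ issue requests only at the single level $q(t,e,k)$, and --- since $M$'s description then refers to $q$, hence to $x$ --- closes the loop with the recursion theorem. You instead issue requests at \emph{every} level $j$, paying for the extra requests with the headers $2\lceil\log_2(t+2)\rceil+2\lceil\log_2(e+2)\rceil+2\lceil\log_2(j+2)\rceil$ so the weight still sums below $1$, and only afterwards define $q(t,e,k)$ by a search against the already-fixed constant $c_M$. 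This trades a closed-form $q$ for a computable search, but dispenses with the recursion theorem entirely; your closing hedge is unnecessary, since $W$ genuinely does not mention $q$. Two small points worth noting: the prefix-free reduction is the right move for the weight estimate, but you should say a word about why the conclusion still covers the non-prefix-minimal $\sigma\in G^e_j$ --- in the intended application (Section~\ref{9gEizhkpFI}) every $G^e_j$ consists of strings of one fixed length, so the issue never arises, and the paper's own weight computation makes the same implicit assumption; and you correctly land the strict inequality $d_e(\sigma)>k$ that the lemma asserts, whereas the paper's final display reads $d_e(\sigma)\ge k$, a harmless off-by-one.
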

\begin{proof}
We construct a \pf machine $M$ and a function $q$, and by the recursion theorem we may use
the index $x$ of $M$ in the definition of $M,q$.\footnote{Formally, we run the construction with a
free parameter $y$ which is treated as if it is the index of the machine that we are constructing
(although it may not be) thus producing a computable function $h$ which gives
the uniform sequence of machines $M_{h(y)}$ (given a universal enumeration $M_i$ of all
\pf machines). In the argument here, the weight of each $M_{h(y)}$ is explicitly forced to be at most
1, thereby stoping any further computation if this bound is reached. This ensures that
for each $y$ the process defines a prefix-free 
machine, even if $y$ is not an index of the machine constructed.
Then by the recursion theorem we choose $x$ such that $M_{h(x)}=M_x$
and this is the desired index for which the construction correctly assumes that
the input number is an index of the machine being constructed.} 
We define $q(t,e,k)=t+e+k+x+5$ and the machine $M$ as follows.
For each index $t$ we let $\GG^i=(G^i_j)$ be the uniform sequence of clopen tests
with index $t$ and
for each $e,k$ we wait until $\mu_e(G^e_{q(t,e,k)})\de$; if and when this happens
we compress all strings $\sigma\in G^e_{q(t,e,k)}$ using $M$,
so that $K_M(\sigma)= -\log\mu_e(\sigma)- k-x$ (and stop these compressions if and when
the compression cost exceeds 1). 
Since $\mu_e(G^e_{q(t,e,k)})<2^{-t-e-x-k-5}$
the cost of this operation is $2^{-t-e-k-x-5+k+x}=2^{-t-e-5}$ and the total cost is
$\sum_{t,e} 2^{-t-e-5} <1$
so the compression of $M$ as we described it will not be stopped due to an excess of the
compression cost. Then by the definition of $M$, for each
$t,e$ such that $\mu_e(G^e_{q(t,e,k)})\de$ and each $\sigma\in G^e_{q(t,e,k)}$,
\[
K(\sigma)\leq K_M(\sigma)+x=-\log\mu_e(\sigma) +x - k -x=-\log\mu_e(\sigma) - k 
\]
so $d_e(\sigma)\geq k$ which concludes the proof.
\end{proof}
Informally, the computable function $p$ of Lemma \ref{LAaax2mHgG} will tell
the construction of Section \ref{9gEizhkpFI}, how long it needs to build the 
$\mu_e$-test $\GG^e$ for each $e$, in order to achieve the required 
$\mu_e$-randomness deficiency. 


\subsection{The domination lemma}\label{9gEizhkpFI}
For the verification of the dominating function from the learner, in Section \ref{3hp9XGohn},
we need a lemma which says, roughly speaking, that for each computable function $h$ there exists
a computable measure whose time-complexity is higher than $h$ and which resembles the uniform
measure except for a very sparse set of strings.\footnote{A similar tool was used in
\cite{AdlemanB91}, namely a version of the fact from \cite{Blum:1967:MTC} that
given any computable function $h$ there exists a computable function
such that any implementation of it converges in time exceeding $h$.}
Before we state the lemma, we need the following definitions.

\begin{defi}[Relative equality of measures in intervals]
Given two measures $\mu,\lambda$ we say that $\mu$ is relatively equal (or identical) 
to $\lambda$
in an interval $[a,b]$ if for all strings $\sigma$ of length in $[a,b)$ and each $j\in\{0,1\}$ we have
$\mu(\sigma\ast j)/\mu(\sigma)=\lambda(\sigma\ast j)/\lambda(\sigma)$.
\end{defi}

Note that if 
$\mu$ is relatively identical 
to $\lambda$ in $[a,b]$ then 
for each $s,t\in [a,b]$ with $s<t$, each $\sigma\in 2^s$ 
and each \pf set $D$ of strings in $2^{\leq t}$ extending $\sigma$,
we have $\mu_{\sigma}(D)= \lambda_{\sigma}(D)$, where
$\mu_{\sigma}(D):=\mu(D)/\mu(\sigma)$ is the conditional $\mu$-measure
with respect to $\sigma$, and similarly with $\lambda_{\sigma}$.

\begin{defi}[Sparse measures]
Given an increasing function $h$, a sequence $(n_i)$ is $h$-sparse if $h(n_i)<n_{i+1}$
for each $i$. A measure $\lambda$ is $h$-sparse if there exists an
$h$-sparse sequence  $(n_i)$ such that for each $i$, $\lambda$ is relatively identical  to the
uniform measure in $(n_i, n_{i+1}]$.
\end{defi}

The following crucial lemma is exactly what we need for the argument of Section \ref{3hp9XGohn}.
\begin{lem}[Time-complexity of computable measures]\label{ADVYKUBE4x}
Given computable functions $h,g,p$ 
there exists a $p$-sparse computable measure $\lambda$ and
a $\Pi^0_1$ class of reals $\CC$ 
such that $\lambda(\CC)=1$ and
for every $X\in\CC$ and every index $i$ of a computable measure 
$\mu_i$ such that the $\mu_i$-deficiency of $X$ 
is $\leq g(i)$, the time-complexity of $\mu_i$ dominates $h$.
\end{lem}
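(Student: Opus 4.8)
The plan is to build the sparse measure $\lambda$ and the $\Pi^0_1$ class $\CC$ by a single diagonalization that, for each index $i$, prevents $\mu_i$ from being simultaneously fast (time-complexity not dominating $h$) and having small $X$-deficiency ($\leq g(i)$) on a non-negligible part of $\CC$. The key idea is that if $\mu_i$ \emph{is} fast, then at the stage where we are working on a long interval $(n_i,n_{i+1}]$ we can actually compute a substantial chunk of $\mu_i$ and compare it against the uniform measure; wherever $\mu_i$ assigns much less mass than uniform, those strings have large $\mu_i$-deficiency and can be thrown into a $\mu_i$-clopen test, while wherever $\mu_i$ stays close to uniform we simply have not yet caused trouble and can keep waiting at later intervals. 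Since $\lambda$ is forced to look like the uniform measure on each interval $(n_i,n_{i+1}]$ (this is the $p$-sparseness requirement we must respect), the reals we want to keep in $\CC$ retain positive (indeed full) $\lambda$-mass.

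**Construction.**
First I would fix the threshold function: we only need to defeat measures with deficiency $\leq g(i)$, so via Lemma~\ref{LAaax2mHgG} (applied to the uniform sequence of clopen tests we are constructing, with $k:=g(i)+1$) obtain from the recursion theorem an effective index and the function $q$ telling us which member $G^i_{q(\cdot,i,g(i)+1)}$ of the test forces deficiency exceeding $g(i)$; call this target index $m_i$. Now define the sparse sequence $(n_i)$ recursively: given $n_i$, let $n_{i+1}$ be large enough that (a) $n_{i+1}>p(n_i)$, so sparseness w.r.t.\ $p$ holds, and (b) $n_{i+1}-n_i$ is large enough that, \emph{if} $\mu_i$'s time-complexity is $\leq h$, then by stage $h(n_{i+1})$ the value $\mu_i(\sigma)$ converges for every $\sigma$ of length $\leq n_{i+1}$, and moreover $2^{n_{i+1}-n_i}$ exceeds the denominator $2^{t+i+x+g(i)+6}$ arising in the weight bookkeeping of the clopen test, so that the clopen-test member $G^i_{m_i}$ can be filled with enough strings of length $n_{i+1}$ extending a fixed "survivor" node at level $n_i$ to have $\mu_i$-measure $\leq 2^{-m_i}$ while still containing a positive uniform-fraction of extensions. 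On the interval $(n_i,n_{i+1}]$ we set $\lambda$ equal to uniform (relatively), except we make $\CC$ avoid the clopen set $\dbra{G^i_{m_i}}$. The set $G^i_{m_i}$ is chosen to be the set of those strings $\sigma$ of length $n_{i+1}$, extending survivor nodes, for which $\mu_i(\sigma)$ (once computed) is noticeably below $2^{-n_{i+1}}$; if $\mu_i$ never converges in time, $G^i_{m_i}$ stays undefined and the lemma's conclusion holds vacuously for that $i$ because its time-complexity does not dominate — wait, we actually want it to dominate, so the real point is: if $\mu_i$ is \emph{fast}, then either $\mu_i$ is close to uniform on $(n_i,n_{i+1}]$ for \emph{every} survivor extension (in which case we keep all of them and pass to $i+1$), or it is far from uniform somewhere, and then we push the light strings into $G^i_{m_i}$, giving them $\mu_i$-deficiency $>g(i)$, and remove them from $\CC$. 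Iterating the "close to uniform" case down the surviving subtree, the intersection $\CC$ of all the survivor subtrees is a $\Pi^0_1$ class, and since at each level we remove only a $\lambda$-null portion (or more carefully, a geometrically summable fraction we can make arbitrarily small) we get $\lambda(\CC)=1$.

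**The obstacle.**
The delicate point — and I expect it to be the main obstacle — is the circularity between three things: the clopen test $\GG^i$ determines, via $q$ and the recursion theorem, the index $m_i=q(t,i,g(i)+1)$, which is a fixed number once the enumeration index $t$ of the whole sequence $(\GG^i)$ is chosen; but the size of $G^i_{m_i}$ we can afford (its $\mu_i$-measure must be $\leq 2^{-m_i}$) shrinks as $m_i$ grows, and $m_i$ depends on $t$ and on the machine index $x$, which depends on the construction. The resolution, as in Lemma~\ref{LAaax2mHgG}, is that $q(t,e,k)=t+e+k+x+5$ is \emph{linear}, so $m_i$ is only polynomially (in fact linearly) large in $i$, hence $2^{-m_i}$ is still exponentially large compared to $2^{-n_{i+1}}$ once we choose $n_{i+1}-n_i > m_i + \log(\text{bookkeeping constants})$ — which we are free to do because $n_{i+1}$ is chosen \emph{after} $m_i$ is determined. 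So the construction is: fix $t$ and $x$ by the recursion theorem first (the test is "self-describing"), then choose the $n_i$'s greedily large. Once that ordering is in place, the verification that (i) $\lambda$ is $p$-sparse and computable, (ii) $\CC$ is $\Pi^0_1$ with $\lambda(\CC)=1$, and (iii) any $\mu_i$ that is both fast and has $X$-deficiency $\leq g(i)$ for some $X\in\CC$ yields a contradiction (it would have to be close to uniform on every survivor interval, hence its conditional measures agree with uniform all the way down, making $X$'s $\mu_i$-deficiency comparable to its uniform-deficiency — fine — but then $X$ would have been caught in some $G^i_{m_i}$ the moment $\mu_i$ deviated, and if it never deviates then $\mu_i$ on the survivor tree is literally the uniform measure restricted there, contradicting either fastness bookkeeping or forcing us into the vacuous case) is routine and parallel to the corresponding argument in \cite{AdlemanB91}.
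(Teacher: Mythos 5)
There is a fundamental direction error in your construction: you build $\CC$ so that it \emph{avoids} the test sets $\dbra{G^i_{m_i}}$ (you explicitly ``push the light strings into $G^i_{m_i}$ \ldots\ and remove them from $\CC$''), whereas the lemma forces exactly the opposite containment, $\CC\subseteq\dbra{G^i_j}$. To conclude that every $X\in\CC$ has $\mu_i$-deficiency exceeding $g(i)$ (whenever $\mu_i$ is fast), you need $\CC$ itself to be a $\mu_i$-small set, which means the surviving paths must be the ones on which $\mu_i$ is \emph{small} relative to a suitable reference. In your construction the surviving paths are precisely the ones on which $\mu_i$ stays close to uniform; those reals have \emph{low} $\mu_i$-deficiency. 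Concretely, if $\mu_i$ is literally the uniform measure it is fast, nothing ever gets removed, $\CC$ is full, and every uniform-random $X\in\CC$ has $\mu_i$-deficiency well below $g(i)$; your closing sentence (``contradicting either fastness bookkeeping or forcing us into the vacuous case'') does not produce any contradiction in this case, so the lemma fails for your $\CC,\lambda$.

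A second, related gap is the claim that $\lambda(\CC)=1$. You keep $\lambda$ relatively uniform on each interval and then delete the set of strings where $\mu_i$ is much smaller than uniform. But there is no a priori bound on the \emph{uniform} (hence $\lambda$-) mass of that set: if $\mu_i$ concentrates on a single path, then at every level all but one string is ``light'' and you would be forced to discard essentially all of $\lambda$. Making $n_{i+1}-n_i$ large does not repair this, since the fraction removed depends on the shape of $\mu_i$, not on the interval length. The paper's proof avoids both problems with one move: at each active step for $i$, it routes \emph{all} of the $\lambda$-mass into the child with the smaller $\mu_i$-measure and lets $C_{s+1}$ follow that child. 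This anti-correlates $\lambda$ with $\mu_i$, so $\lambda(C_s)$ stays $1$ while $\mu_i(C_s)$ is halved at every active stage, giving $\CC\subseteq\dbra{G^i_j}$ with $\mu_i(G^i_j)\leq 2^{-j}$ for free. The resulting $\lambda$ is in general far from uniform on the active levels (it is a one-point mass at each node of $C_s$ there), but it is still $p$-sparse because the active stages are spaced out by $p$. Your recursion-theorem bookkeeping to fix the index $m_i=q(t,i,g(i)+1)$ and the choice to pick the $n_i$'s greedily afterwards are both fine and match what the paper does; it is the core measure-routing step that must be reversed.
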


We can easily derive Lemma \ref{ADVYKUBE4x} from the following technical lemma,
whose proof we give in Section \ref{suKJfH4yOo}.

\begin{lem}\label{j1A3ktur8B}
Given any computable functions $h,g,p$ 
there exists an $p$-sparse computable measure $\lambda$,
a $\Pi^0_1$ class of reals $\CC$ 
such that $\lambda(\CC)=1$ and
a uniform sequence of clopen tests $\GG^i=(G_j^i)$ 
such that, if $\ell_i$ is the length of $\GG^i$, we have:
\begin{enumerate}[\hspace{0.5cm}(a)]
\item $\mu_i(G_j^i)\leq 2^{-j}$ and $\CC\subseteq \dbra{G_j^i}$ for all $j< \ell_i$;
\item if the time-complexity of $\mu_i\restr_n$ does not dominate $h$
then the length of $\GG^i$ is $g(i)$.
\end{enumerate}
In particular, $\GG^i$ is a $\mu_i$-test for each $i$ such that $\mu_i$ is total,
and $\mu_i(V)\leq 2^{-g(i)}$ if in addition clause (b)  holds.
\end{lem}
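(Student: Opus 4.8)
\emph{The plan} is to build $\lambda$, $\CC$ and $(\GG^i)$ by a single computable stage construction. Fix (WLOG) strictly increasing computable $h,p$, and assume $g(i)\ge 1$ for all $i$ (for $i$ with $g(i)=0$ take $\GG^i$ empty). Fix a computable sequence of lengths $n_0<n_1<\cdots$ with $n_{k+1}:=p(n_k)+1$. The measure $\lambda$ will agree with the uniform measure when extending any string whose length is not one of the $n_k$, so it is automatically $p$-sparse; at a length $n_k$ it may move a parent's entire mass onto a single child. The class will be $\CC=\bigcap_k\dbra{C_k}$, where $C_k\subseteq 2^{n_k}$ is the set of strings that have survived the first $k$ stages, with $C_0=2^{n_0}$. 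For every $i$ we immediately set $G^i_0:=\{\estring\}$ (the whole space), a trivial component; every further component of $\GG^i$ will be produced by an \emph{action for} $i$.

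At stage $k$ let $R_k$ be the set of $i\le k$ such that $\GG^i$ currently has length $<g(i)$ and $\mu_i(2^{\le n_k+1})[h(n_{k+1})]\de$ (a bounded, hence decidable, test). If $R_k=\emptyset$ we extend $\lambda$ uniformly across the block and let $C_{k+1}$ consist of all length-$n_{k+1}$ extensions of $C_k$. Otherwise we \emph{act for} $i_k:=\min R_k$: for each $\sigma\in C_k$ keep the child $\sigma b_\sigma$ with $\mu_{i_k}(\sigma b_\sigma)=\min(\mu_{i_k}(\sigma 0),\mu_{i_k}(\sigma 1))$, set $\lambda(\sigma b_\sigma\mid\sigma)=1$, extend $\lambda$ uniformly above length $n_k+1$, let $C_{k+1}$ be the length-$n_{k+1}$ extensions of the kept children, and declare the next component of $\GG^{i_k}$ to be the (explicitly listed) set of kept children, incrementing its length. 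Each stage is a bounded computation ($k{+}1$ simulations of length $h(n_{k+1})$ plus finitely much bookkeeping on $C_k$), so $\lambda$ is total computable and $\CC$ is $\Pi^0_1$; since redistribution only relocates a parent's $\lambda$-mass, $\lambda(\dbra{C_k})=1$ for all $k$, whence $\lambda(\CC)=1$.

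For clause (a): as $\dbra{C_{k+1}}\subseteq\dbra{C_k}$ always, every $\mu_i$-measure of these clopen sets is non-increasing, and since $\min(\mu_{i}(\sigma 0),\mu_{i}(\sigma 1))\le\frac12\mu_i(\sigma)$, each action for $i$ halves $\mu_i(\dbra{C_\bullet})$; hence the $m$-th component of $\GG^i$ (for $m\ge 1$) has $\mu_i$-measure $\le 2^{-m}$, and it contains $\CC$ because $\CC\subseteq\dbra{C_\bullet}$ at every stage (for $m=0$ both requirements are trivial). This also yields the last sentence of the lemma, with the $m$-th component carrying the bound $2^{-m}$. For clause (b) the point is the \emph{generous budget} $h(n_{k+1})$ combined with monotonicity of the time-complexity function $\mathrm{tc}_i$ of $\mu_i$: if $\mu_i$ is total and $\mathrm{tc}_i$ does not dominate $h$, i.e. $\mathrm{tc}_i(n)<h(n)$ for infinitely many $n$, then for such an $n$ lying in the block $(n_k,n_{k+1}]$ we get $\mathrm{tc}_i(n_k+1)\le\mathrm{tc}_i(n)<h(n)\le h(n_{k+1})$, so $i\in R_k$ as long as $\GG^i$ is not yet capped; as each block is a finite interval, $i$ lies in $R_k$ for infinitely many $k$. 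If $\GG^i$ never reached length $g(i)$, then at each such stage $\min R_k<i$, so by the pigeonhole principle some fixed $i'<i$ would be served infinitely often — impossible, since an index leaves all future $R_k$ once its test reaches length $g(i')$. Hence $\GG^i$ has length exactly $g(i)$. Finally, because computing $\lambda$ up to length $n_k+1$ includes running those length-$h(n_{k+1})$ simulations, $\mathrm{tc}_\lambda(n)\ge h(n)$ for all $n>n_0$; so for any $e$ with $\mu_e=\lambda$, $e$ is never in $R_k$, $\GG^e$ stays trivial, and there is no clash with $\lambda(\CC)=1$. (A slow but total $\mu_i\ne\lambda$ may still be served a few times, which is harmless; a partial $\mu_i$ simply makes $\GG^i$ a partial clopen test.)

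\emph{The main obstacle} is precisely clause (b). Since the lengths at which $\lambda$ may be non-uniform must be $p$-sparse and $p$ can grow arbitrarily fast, a ``fast'' measure whose fast lengths are themselves extremely sparse threatens never to be caught at a usable length. The fix is to exploit that $\mathrm{tc}_i$ is non-decreasing: one fast length anywhere in the block $(n_k,n_{k+1}]$ certifies that $\mu_i$ has already converged up to length $n_k+1$ well inside the budget $h(n_{k+1})$, so $i$ enters $R_k$; together with the ``serve the least ready index'' rule and the pigeonhole argument this forces every fast $\mu_i$ to be served exactly $g(i)-1$ times, while the same budget simulations keep $\lambda$ slow (so its own test stays trivial) and the choice of special lengths keeps $\lambda$ $p$-sparse.
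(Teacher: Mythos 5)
Your construction is correct and rests on the same two ideas as the paper's proof: whenever a ``fast'' index $i$ is detected, keep for each surviving $\sigma$ the child of smaller $\mu_i$-measure while transferring all of $\sigma$'s $\lambda$-mass onto it (so each action for $i$ halves $\mu_i(\dbra{C_\bullet})$ while preserving $\lambda(\dbra{C_\bullet})=1$), and run convergence checks with a budget governed by $h$ so that, by monotonicity of the time-complexity function, every total $\mu_i$ that fails to dominate $h$ is caught infinitely often. Where you genuinely diverge is the bookkeeping. The paper lets actions occur at arbitrary stages and enforces $p$-sparseness by \emph{suspending} the next $p(s)$ stages, which forces a look-ahead clause into the ``requires attention'' condition (an index may not act if a higher-priority index might converge inside the suspension window) and a finite-injury verification. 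You instead pre-commit to the action lengths $n_{k+1}=p(n_k)+1$ and serve the least ready index at each one, so priority is settled by a plain pigeonhole count (each $i'$ is served at most $g(i')-1$ times), with no injury argument; the observation that one fast length anywhere in a block certifies convergence within that block's budget makes this work at no cost. One remark: your justification that indices $e$ with $\mu_e=\lambda$ cause no trouble (``computing $\lambda$ includes running the simulations'') only bounds the time-complexity of the canonical index of $\lambda$, not of every index of $\lambda$. The correct reason is that such an $e$ can never be acted on at all: if it were, then some $\sigma\in C_k$ with $\lambda(\sigma)>0$ would have its kept child receive $\lambda$-mass $\lambda(\sigma)>0$ while being the $\mu_e$-minimal child, whose sibling has $\mu_e$-mass $0$, contradicting $\mu_e=\lambda$; combined with your clause (b) argument this shows a posteriori that every index of $\lambda$ has time-complexity dominating $h$. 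Since clauses (a) and (b) are verified directly for every $i$, this is a presentational imprecision rather than a gap.
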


Note that the hypothesis in clause (b) above implies that $\mu_i$ is total.

We prove Lemma \ref{ADVYKUBE4x} so fix $h_0,g_0,p_0$ in place of
$h,g,p$  in the statement of the lemma. 
In order to obtain the desired $\CC,\lambda$ corresponding to 
the given $h_0,g_0,p_0$,
we are going to apply
Lemma \ref{j1A3ktur8B} on $h=h_0,g=g_1,p=p_0$ where 
$g_1$ is an appropriate function related to $g_0$, which we are going to obtain as 
a fixed-point.
If we fix $h=h_0,p=p_0$ and regard (the index of) $g$ as a variable in
Lemma \ref{j1A3ktur8B}, we get a
total effective index-map $j\mapsto v(j)$
from any index of a function $g$, to an index 
$v(i)$ of a uniform sequence of clopen tests $\GG^i$, 
such that if the function $g$ with index $j$ is total, the stated properties
hold.
Now recall the function $q$ of Lemma \ref{LAaax2mHgG}. 
By the recursion theorem and the s-m-n theorem we can choose $j_0$ such that
the function with index $j_0$ is the same as the function 
$e\mapsto q(v(j_0),e,g_0(e))$. In particular, this function it total 
(because $e\mapsto q(v(j),e,g_0(e))$ is total for every $j$) so we can define
$g_1$ to be the function with index $j_0$. Then the properties of $q$
and the uniform sequence of clopen tests indexed by $v(j_0)$ according to 
Lemma \ref{j1A3ktur8B}, shows that 
Lemma \ref{j1A3ktur8B} applied to $h=h_0, g=g_1, p=p_0$ produces $\lambda, \CC$
satisfying the properties of 
Lemma \ref{ADVYKUBE4x} for $h=h_0, g=g_0, p=p_0$.

\subsection{Proof of Lemma \ref{j1A3ktur8B}}\label{suKJfH4yOo}
Given $h,g,p$ we produce a sequence $(C_s)$ of finite sets of strings
such that the strings in $C_s$ have length $s$ and every string in $C_{s+1}$ has
a prefix in $C_s$. 
Then we define $\CC$ to be the set of reals which 
have a prefix in each of the sets $C_s$, \ie 
$\CC=\cap_s \dbra{C_s}$
which is clearly a $\Pi^0_1$ set, and ensure that $\CC$ is $p$-sparse.
Without loss of generality we assume that $p$ is increasing.

We also define a uniform sequence of clopen tests $(G^i_j)$ 
whose members will be defined to be $C_s$ for certain $s$.
In particular, if stage $s$ {\em acts for} $i$ then the next member of  $\GG^i$ will be defined
equal to $C_s$.
In this sense, certain members of $(C_s)$ become members of the 
tests $\GG^i$ in a one-to-one fashion (in the sense that each $C_s$ is assigned to at most
one $\GG^i$). In order to make $\lambda$ a $p$-sparse measure, if stage $s$ acts for
some $i$ (hence, it is an active stage) the next $p(s)$ many stages of the construction
are {\em suspended}, in the sense that no action is allowed on those stages. 
Define $\lambda$ on the empty string equal to 1.

 At each stage $s$ some $i<s$ may require attention, in which case $s$ will be
 an $i$-stage for the least such $i$, and a new member of $\GG^i$ will 
 be defined equal to $C_s$.
We say that $i<s+1$ {\em requires attention} at stage $s+1$ if
it has received attention less than $g(i)$ many times and
\begin{itemize}
\item $\mu_i\restr_s[h(s+1)]\de$;
\item for all $j<i$ and all $n\leq (s+1, f(s+1)]$ we have $\mu_j\restr_n[h(n)]\un$
or $j$ has acted $g(j)$ times.
\end{itemize}
Intuitively, for $i$ to require attention at stage $s+1$ we require first that
it has received attention less than $g(i)$ many times, second that 
we can determine the values of $\mu_i$ for strings up to length $s+1$ (which are the
strings from which we are about to choose the members of 
$C_{s+1}$) and third, that
in none of the future stages that are about to become suspended as a result to
$\mu_i$ receiving attention at this stage, some $\mu_j$ with $j<i$ might require attention.

{\em Construction of $\lambda$ and $(C_s)$.} At stage $s+1$, if this is a suspended stage
or no $i\leq s$ requires attention, 
\begin{itemize}
\item let $C_{s+1}=\{\sigma\ast 0, \sigma\ast 1\ |\ \sigma\in C_s\}$;
\item for each $\sigma\in C_s$ and $j\in\{0,1\}$ let $\lambda(\sigma\ast j)=\lambda(\sigma)/2$;
\end{itemize}
 and 
go to  stage $s+2$.
Otherwise,  consider the least $i\leq s$ which requires attention
and for each string $\sigma$ of length $s+1$ do the following:  
\begin{itemize}
\item define $j_{\sigma}\in\{0,1\}$ to be
the least such that $\mu_i(\sigma\ast j_{\sigma})\leq \mu_i(\sigma\ast (1-j_{\sigma}))$
\item for each $\sigma\in C_{s}$ define $\lambda(\sigma\ast  j_{\sigma})=0$;
and $\lambda(\sigma\ast (1- j_{\sigma}))=\lambda(\sigma)$;
\item for each $\sigma\in 2^s-C_{\sigma}$ define 
$\lambda(\sigma\ast  0)=\lambda(\sigma\ast  1)=\lambda(\sigma)/2$;
\item define $C_{s+1}=\{\sigma\ast (1-j_{\sigma})\ |\ \sigma\in C_{s}\}$.
\end{itemize}
Declare stage $s+1$ active 
(acting on $i$) and the next $p(s+1)$ stages suspended.
Also let $j$ be the least such that $G^i_j\un$ and define
$G^i_j=C_{s+1}$.

\begin{lem}
The function $\lambda$ is a continuous computable $p$-sparse measure  and
$\lambda(\CC)=1$.
\end{lem}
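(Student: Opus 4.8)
The plan is to verify the three claims of the lemma — that $\lambda$ is a continuous computable measure, that it is $p$-sparse, and that $\lambda(\CC)=1$ — by inspecting the construction stage by stage. The fact that $\lambda$ is a measure is immediate from the construction: at every stage $s+1$, for each $\sigma\in 2^s$ we define $\lambda(\sigma\ast 0)$ and $\lambda(\sigma\ast 1)$ so that their sum equals $\lambda(\sigma)$ (in the inactive case both halves are $\lambda(\sigma)/2$; in the active case, for $\sigma\in C_s$ one child gets $\lambda(\sigma)$ and the other $0$, and for $\sigma\notin C_s$ both get $\lambda(\sigma)/2$), and $\lambda(\estring)=1$. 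Computability follows because the construction is effective: at each stage we run $\mu_i\restr_s[h(s+1)]$ for $i\leq s$, check the finitely many conditions defining ``requires attention'', and then update $\lambda$ on strings of length $s+1$; all of this halts.

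\medskip

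Next I would prove continuity, i.e. $\lambda(\{X\})=0$ for every real $X$. The key observation is that the set of active stages is infinite and their lengths (which equal the lengths $s+1$ of the strings involved) are unbounded: indeed, after any suspension of $p(s+1)$ stages the construction resumes, and as long as some $i<g(i)$ has not yet received attention $g(i)$ times there is nothing that permanently blocks action — but even if no stage ever acts again, the inactive branches halve $\lambda$ along every string, so $\lambda(X\restr_n)\to 0$. More carefully: for a string $\sigma$ on which a (non-null) child is chosen at some stage, either that stage is inactive (so $\lambda$ halves) or it is active for some $i$, in which case the unique surviving child $\sigma\ast(1-j_\sigma)$ keeps the full mass but the sibling is killed; since between any two active stages there is at least one inactive stage (because an active stage is followed by $p(s+1)\geq 1$ suspended, hence inactive, stages), along any real $X$ the value $\lambda(X\restr_n)$ is halved infinitely often, so it tends to $0$. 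This gives continuity.

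\medskip

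For $p$-sparseness I would exhibit the witnessing $h$-sparse (here $p$-sparse) sequence $(n_i)$: let $n_0=0$ and let the $n_i$ for $i\geq 1$ enumerate, in increasing order, the lengths $s+1$ at which the construction is active (equivalently, the $s+1$ with $C_{s+1}$ assigned to some $\GG^k$). Since acting at stage $s+1$ suspends the next $p(s+1)$ stages, and $p$ is increasing (assumed without loss of generality), the next active stage has length exceeding $s+1+p(s+1)>p(s+1)=p(n_i)$, so $p(n_i)<n_{i+1}$. It remains to check that $\lambda$ is relatively identical to the uniform measure on each interval $(n_i,n_{i+1}]$: for every stage $s+1$ strictly between consecutive active stages the construction takes the inactive branch, so $\lambda(\sigma\ast j)=\lambda(\sigma)/2$ for all $\sigma$ of length $s$ and $j\in\{0,1\}$, which is exactly the uniform splitting ratio $\lambda(\sigma\ast j)/\lambda(\sigma)=1/2=(\text{uniform})(\sigma\ast j)/(\text{uniform})(\sigma)$, as required.

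\medskip

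Finally, for $\lambda(\CC)=1$ I would show by induction on $s$ that $\lambda(C_s)=1$, where $\lambda(C_s):=\sum_{\sigma\in C_s}\lambda(\sigma)$; since $\CC=\bigcap_s\dbra{C_s}$ and the $\dbra{C_s}$ are nested clopen sets, $\lambda(\CC)=\lim_s\lambda(C_s)=1$. The base case $s=0$ is $\lambda(\estring)=1$. For the inductive step, in an inactive stage $C_{s+1}$ consists of both children of each $\sigma\in C_s$ and the two children split $\lambda(\sigma)$, so $\lambda(C_{s+1})=\lambda(C_s)=1$; in an active stage $C_{s+1}$ keeps exactly the surviving child $\sigma\ast(1-j_\sigma)$ for each $\sigma\in C_s$, which carries the full mass $\lambda(\sigma)$, so again $\lambda(C_{s+1})=\lambda(C_s)=1$. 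I expect the main obstacle to be the continuity argument — one has to argue carefully that active stages cannot permanently take over the construction, or rather that it does not matter whether they do, because the interleaved inactive stages already force the mass along every real down to zero; everything else is a routine bookkeeping induction.
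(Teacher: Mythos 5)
Your proposal is correct and follows essentially the same route as the paper's own proof: the measure property is verified inductively from the stagewise splitting, computability from effectiveness, $p$-sparseness by taking the active stages as the witnessing sparse sequence (the suspension mechanism giving the gap $p(n_i)<n_{i+1}$), continuity from the fact that suspensions guarantee infinitely many halving stages along every branch, and $\lambda(\CC)=1$ by the induction $\lambda(C_s)=1$. You simply flesh out the details that the paper compresses into a few sentences, in particular the continuity argument via the interleaved inactive stages.
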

\begin{proof}
Inductively it follows that $\lambda$ is a measure representation and since the construction
is effective, it is also computable. Also if $(n_i)$ are the active stages, the
construction shows that $\lambda$ is relatively identical to the uniform measure 
in each of the intervals 
$[n_i,n_{i+1}]$, since it only differs from the uniform 
measure on the strings of lengths $n_i, i\in\Nat$.
Moreover by construction $(n_i)$ is $p$-sparse, which shows that $\lambda$ is also
$p$-sparse. Clearly there are infinitely many stages which are either suspended or
no $i$ requires attention, so $\lambda$ is continuous. 
Finally at each step that $\dbra{C_{s+1}}$ is smaller than $\dbra{C_s}$, 
all the $\lambda$-measure of  $\dbra{C_s}$ is transferred to $\dbra{C_{s+1}}$. Hence
$\lambda(C_s)=1$ for all $s$, which shows that $\lambda(\CC)=1$.
\end{proof}
Note that the sequence $G^i_j$ is uniformly partial computable, in the sense that
the function $(i,j)\mapsto G^i_j$ is partial computable.
\begin{lem}
For each $i,j$, if $G^i_{j+1}\de$ then $G^i_{j}, \mu_i(G^i_{j+1}), \mu_i(G^i_{j})$
are defined, $\CC\subseteq\dbra{G^i_{j+1}}$  and
$\mu_i(G^i_{j+1})\leq  \mu_i(G^i_{j})/2$. 
In particular, for each $i$, $\GG^i$ is a $\mu_i$-test.
\end{lem}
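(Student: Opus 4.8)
The plan is to reduce everything to the behaviour of the single quantity $\mu_i(C_s)$ as $s$ grows. List the stages that act for $i$ as $t_0<t_1<\cdots$; by the construction the set $G^i_j$ is declared, in order of increasing index, at the stage $t_j$, so that $G^i_j=C_{t_j}$ for every $j$ for which $t_j$ exists. Hence, if $G^i_{j+1}\de$, then the stage $t_{j+1}$ has occurred, so $t_j$ has too and $G^i_j=C_{t_j}\de$; moreover $\CC=\cap_s\dbra{C_s}\subseteq\dbra{C_{t_{j+1}}}=\dbra{G^i_{j+1}}$, which disposes of the first and the fourth assertions. For the definedness of the $\mu_i$-values, recall that $i$ acts at a stage only if it requires attention there, and the convergence conditions built into ``requires attention'' guarantee that at such a stage $\mu_i$ has already converged on every string used to form the associated $C$-set (and hence, by our conventions on the enumeration $(\mu_e)$, on all shorter strings); thus $\mu_i(C_{t_{j+1}})=\mu_i(G^i_{j+1})\de$, and since $i$ also acted at the earlier stage $t_j$ and partial computations persist, $\mu_i(C_{t_j})=\mu_i(G^i_j)\de$ as well. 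The same reasoning applied to $t_0$ gives $\mu_i(G^i_0)\de$, and shows that $\mu_i(C_s)\de$ for every $s\le t_{j+1}$ once $G^i_{j+1}\de$.

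The crux is the inequality $\mu_i(G^i_{j+1})\le\mu_i(G^i_j)/2$, which I would obtain from a monotonicity claim: whenever the relevant values are defined, $\mu_i(C_{s+1})\le\mu_i(C_s)$, and in fact $\mu_i(C_{s+1})\le\mu_i(C_s)/2$ when stage $s+1$ acts for $i$. Using that a set of equal-length strings is prefix-free, so that $\mu_i(C_s)=\sum_{\sigma\in C_s}\mu_i(\sigma)$, this splits into three cases. At a suspended stage, or a stage at which no index requires attention, $C_{s+1}$ consists of both one-bit extensions of every $\sigma\in C_s$, so $\mu_i(C_{s+1})=\sum_{\sigma\in C_s}(\mu_i(\sigma\ast0)+\mu_i(\sigma\ast1))=\mu_i(C_s)$ by the defining identity of a measure representation. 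At a stage acting for some $i'\ne i$, $C_{s+1}$ keeps exactly one one-bit extension $\sigma\ast c_\sigma$, with $c_\sigma\in\{0,1\}$, of each $\sigma\in C_s$, so $\mu_i(C_{s+1})=\sum_{\sigma\in C_s}\mu_i(\sigma\ast c_\sigma)\le\sum_{\sigma\in C_s}\mu_i(\sigma)=\mu_i(C_s)$. At a stage acting for $i$ itself, the extension kept for each $\sigma\in C_s$ is the one of smaller $\mu_i$-measure (equivalently, the one on which the $\lambda$-mass of $\sigma$ is placed), so $\mu_i(C_{s+1})=\sum_{\sigma\in C_s}\min\{\mu_i(\sigma\ast0),\mu_i(\sigma\ast1)\}\le\frac{1}{2}\sum_{\sigma\in C_s}\mu_i(\sigma)=\frac{1}{2}\mu_i(C_s)$. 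Since no stage strictly between $t_j$ and $t_{j+1}$ acts for $i$, the non-strict part of the claim gives $\mu_i(C_{t_{j+1}-1})\le\mu_i(C_{t_j})$, and the halving at stage $t_{j+1}$ then yields $\mu_i(G^i_{j+1})=\mu_i(C_{t_{j+1}})\le\frac{1}{2}\mu_i(C_{t_{j+1}-1})\le\frac{1}{2}\mu_i(G^i_j)$.

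For the final (``in particular'') clause, $\mu_i(C_s)$ is non-increasing from $\mu_i(C_0)=1$ --- the value that every measure representation assigns to the empty string --- and is halved at $t_0$, so $\mu_i(G^i_0)\le1/2$, whence inductively $\mu_i(G^i_j)\le2^{-j-1}\le2^{-j}$ for all $j$ with $t_j$ defined; combined with the definedness established above and the fact (already noted before the lemma) that $(i,j)\mapsto G^i_j$ is partial computable, this is exactly the statement that $\GG^i$ is a clopen $\mu_i$-test. I expect the one place that needs real care to be pinning down precisely which $\mu_i$-values the ``requires attention'' clause makes available at the moment a member of $\GG^i$ is created, so that the monotonicity claim can be iterated legitimately over the entire block of stages from $t_j$ to $t_{j+1}$; beyond that the argument is just a careful ledger of $\mu_i(C_s)$ kept across the three kinds of stages, and I see no deeper obstacle.
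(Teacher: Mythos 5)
Your proof is correct and follows the same strategy as the paper's: identify each $G^i_j$ with the $C$-set at the $j$-th stage acting for $i$, read off inclusion in $\CC$ and the definedness of the $\mu_i$-values from the ``requires attention'' convergence clause plus the monotonicity convention on the enumeration $(\mu_e)$, and then obtain the halving from the construction's choice of the smaller-$\mu_i$ extension at $i$-acting stages combined with non-increase of $\mu_i(C_s)$ at all other stages. Your explicit three-case monotonicity ledger is the same inequality chain the paper intends; in fact you have written it more carefully than the paper does, which inverts several of the inequalities and set inclusions in its own displayed chain (it writes $\mu_i(C_{s-1})\le\mu_i(C_s)/2$ and $\dbra{C_t}\subseteq\dbra{C_{s-1}}$ where the reverse directions are meant, and misidentifies $\mu_i(C_s)$ as $\mu_i(G^i_j)$ rather than $\mu_i(G^i_{j+1})$).
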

\begin{proof}
By the construction, if $G^i_{j+1}\de$ then  
necessarily $G^i_{j}$ is defined. Moreover, in this case,
if $G^i_{j+1}$ was defined at some stage $s$, we have
$G^i_{j+1}=C_s$ and stage $s$ acted on $i$, which means that $\mu_i(C_s)[s]\de$ so
$\mu_i(G^i_{j+1})[s]\de$. 
By the definition of $\CC$ as the intersection of all $C_t, t\in\Nat$, we also have
$\CC\subseteq\dbra{G^i_{j+1}}$.
Also since the strings in $C_s$ are longer than the strings in
any $C_t, t<s$, by the same argument we have
$\mu_i(G^i_{j})[s]\de$.
Now let $t<s$ such that $G^i_{j}=C_{t}$ and note that by construction we have
$\mu_i(C_{s-1})\leq \mu_i(C_{s})/2$. Since $t\leq s-1$ we have 
$\dbra{C_t}\subseteq\dbra{C_{s-1}}$ and
\[
\mu_i(G^i_{j})=\mu_i(C_{t})\leq \mu_i(C_{s-1})\leq \mu_i(C_{s})/2=\mu_i(G^i_{j})/2
\]
which concludes the proof of the lemma.
\end{proof}
It remains to show clause (b) of Lemma \ref{j1A3ktur8B}.
If $\mu_i$ is not total then clearly the time-complexity of $\mu_i\restr_n$ will be equal to 
infinity co-finitely often, hence dominating $h$. hence it suffices to show the following
fact by a finite injury argument.
\begin{lem}
If $\mu_i$ is total and its time-complexity does not dominate $h$, then
the length of $\GG^i$ is $g(i)$.
\end{lem}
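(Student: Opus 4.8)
The plan is to run a standard finite-injury priority argument on the requirements
\[
R_i:\quad \text{if }\mu_i\text{ is total and its time-complexity does not dominate }h,\text{ then }i\text{ receives attention exactly }g(i)\text{ times.}
\]
First I would observe that $R_i$ has priority $i$ (lower $i$ = higher priority), and that the only way $R_i$ can be injured is by the suspension mechanism triggered by some $R_j$ with $j<i$ acting; conversely the definition of ``$i$ requires attention'' already builds in a look-ahead that forbids $i$ from acting at $s+1$ if some higher-priority $j<i$ could require attention during the $p(s+1)$ stages that would become suspended. So the first key step is to make this look-ahead argument precise: I would argue that once all requirements $R_j$ with $j<i$ have stopped acting (which happens at some finite stage, by induction), no future suspension can block $i$, and the look-ahead clause is vacuously satisfiable from that point on.

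The second, and main, step is to show that if $\mu_i$ is total with time-complexity not dominating $h$, then $i$ actually does receive attention $g(i)$ times rather than getting stuck at some smaller count. Let $s^*$ be a stage past which no $R_j$ with $j<i$ ever acts again and past which $i$ has already acted as many times as it ever will; toward a contradiction suppose $i$ has acted only $m<g(i)$ times by $s^*$. Since the time-complexity of $\mu_i$ does not dominate $h$, there are infinitely many $n$ with $\mu_i\restr_n[h(n)]\de$ — here I would need to be slightly careful about quantifier order: ``does not dominate $h$'' gives infinitely many $n$ with (time-complexity of $\mu_i$ at $n$) $\le h(n)$, which is exactly $\mu_i\restr_n[h(n)]\de$. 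Pick such an $n>s^*$ that is not a suspended stage (possible since after $s^*$ the suspended stages come only in finitely-many bounded blocks, or rather: between any two potential actions of $i$ only $p(\cdot)$-many stages are suspended, so infinitely many unsuspended stages $>s^*$ remain). At stage $n$, the first clause of ``requires attention'' holds ($\mu_i\restr_{n-1}[h(n)]\de$), the count clause holds ($m<g(i)$), and the look-ahead clause over $j<i$ holds by choice of $s^*$. Hence $i$ requires attention at $n$, and since no $j<i$ does, $i$ is the least such index and acts — contradicting that $i$ never acts after $s^*$.

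The last step is bookkeeping: iterating, $i$ must act a full $g(i)$ times, and since $G^i_j$ is defined equal to $C_s$ precisely at the $j$-th stage $s$ acting for $i$, the length $\ell_i$ of $\GG^i$ equals the number of times $i$ acts, which is $g(i)$. I would also note the converse-trivial direction stated in the lemma's preamble (if $\mu_i$ is partial the time-complexity is infinite cofinitely often, hence dominates $h$) is already disposed of, so this completes clause (b) and hence Lemma~\ref{j1A3ktur8B}. I expect the main obstacle to be the careful handling of the interaction between the suspension blocks and the ``not dominating'' hypothesis — specifically ensuring there are enough unsuspended stages $n>s^*$ with $\mu_i\restr_n[h(n)]\de$; this is where one must use that each action of a higher-priority requirement suspends only finitely many ($p(s)$-many) stages and that, past $s^*$, the total suspended set is a finite union of finite blocks, leaving cofinitely many stages available.
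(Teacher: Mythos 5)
Your overall structure mirrors the paper's: fix a stage $s^*$ past which $R_j$ for $j<i$ never acts again, suppose for contradiction $i$ has acted fewer than $g(i)$ times, use the ``not dominating'' hypothesis to produce a stage $n>s^*$ with $\mu_i\restr_n[h(n)]\de$, argue $i$ must receive attention there, and derive a contradiction. That is the right skeleton, and your discussion of the look-ahead clause from $i$'s perspective and of the quantifier order in ``does not dominate'' is fine.

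However, the step ``pick such an $n>s^*$ that is not a suspended stage'' is where your argument has a genuine gap, and the reason you offer for it is wrong. You claim that past $s^*$ ``the total suspended set is a finite union of finite blocks, leaving cofinitely many stages available.'' This is false: although each individual $R_j$ acts at most $g(j)$ times, there are infinitely many $j>i$, so the total number of actions past $s^*$ (and hence of suspension blocks) can be infinite, and the union of suspended stages need not be cofinite. In particular, it is a priori possible that every $n>s^*$ with $\mu_i\restr_n[h(n)]\de$ lands in some suspension block created by a lower-priority $j>i$, which would stall $i$ forever. What rescues the argument is not a counting bound on suspensions but the \emph{other} half of the look-ahead clause: for $j>i$ to require attention at a stage $t$, the definition demands that for every $k<j$ (so in particular $k=i$) and every $m$ in the look-ahead window $(t,p(t)]$, either $\mu_k\restr_m[h(m)]\un$ or $k$ has already acted $g(k)$ times. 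Under the contradiction hypothesis $i$ has acted fewer than $g(i)$ times, so if $\mu_i\restr_n[h(n)]\de$ for some $n$ in the window that $j$ would be about to suspend, then $j$ is \emph{not allowed} to require attention at $t$. Thus no $j>i$ can suspend the first $n>s^*$ at which $\mu_i\restr_n[h(n)]\de$; that is the argument the paper gives, and it is the piece your proposal is missing. Once you replace your density claim with this use of the look-ahead mechanism, the rest of your outline goes through as in the paper.
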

\begin{proof}
By the definition of `requiring attention', each $j$ will receive attention at most finitely many times.
We argue that each $j$ will also require attention at most $g(i)$ many times. Indeed, otherwise
there would be a least $j$ which requires attention infinitely many times, and by the construction
it will receive attention infinitely many times, which is a contradiction.

Let $s_0$ be a stage after which no $\mu_j, j\leq i$ requires attention, and let
$s_1=f(s_0)$.
It suffices to show that the length of  $\GG^i[s_1]$ is $g(i)$. Note that each time that $i$ receives
attention, the length of (the current state of) $\GG^i$ increases by 1, so the length of 
$\GG^i$ equals the number of times that $i$ receives attention.
Hence the length of $\GG^i$ is at most $g(i)$. 
For a contradiction, assume that 
the length of  $\GG^i$ is less than $g(i)$.
By the hypothesis about $\mu_i$, there will be a least stage $s+1>s_1$ such that
$\mu_i\restr_{s+1}[h(s+1)]\de$. 
Since the length of $\GG^i[s]$ is less than $g(i)$, by the choice of 
$s_0, s_1$, no $j<i$ requires attention at stage $s+1$.

We also argue that $s+1$ is not a suspended stage. Indeed, $s+1$ 
cannot be suspended by some $\mu_j, j<i$ because all of these indices last acted before 
stage $s_0$, which means that they cannot suspend any stage after $p(s_0)=s_1$. 
If $s+1$ was suspended by some $j>i$ then this must have happened at some stage
$t<s+1$ such that $s+1\in (t, p(t)]$.
But according to the construction, $j$ cannot require attention at stage $t$
because $i<j$ and $\mu_i\restr_{s+1}[h(s+1)]\de$ while $s+1\leq p(t)$.
We may conclude that stage $s+1$ is not suspended.

In order to show that $i$ requires attention at stage $s+1$ it remains to show that
for each $j<i$ either $j$ has acted $g(j)$ many times or 
for all $n\leq (s+1, f(s+1)]$ we have $\mu_j\restr_n[h(n)]\un$. Indeed, if 
$j$ had acted less than $g(j)$ many times and the latter condition did not hold, $j$ would
require attention at stage $s+1$, which contradicts our choice of $s_0$. Therefore,
according to the conditions for index $i$ to require attention, $i$ will require attention 
at stage $s+1$ and since no $j<i$ requires attention, it will receive attention at stage $s+1$.
This again contradicts the choice of $s_0$, and concludes the proof that 
at stage $s_1$ the length of $\GG^i$ will be $g(i)$.
\end{proof}

This concludes the proof of Lemma \ref{j1A3ktur8B}.

\subsection{The dominating function of the learner}\label{3hp9XGohn}
Given a learner $\LL$ which EX-succeeds on all computable measures,
we construct a function $f\leq\LL$ which dominates all computable functions.
Note that
\begin{equation}\label{ZHHgiWeRpT}
\parbox{14.7cm}{for each $\sigma$, letting $e:=\LL(\sigma)$, 
there exists some $n_{\sigma}>2|\sigma|$ such that either 
$\mu_e\restr_{|\sigma|}[n_{\sigma}]\de$ or
the proportion of extensions $\tau$ of $\sigma$ of length $n_{\sigma}$ such that
$\LL(\rho)=\LL(\sigma)$ for all $\rho\in [\sigma,\tau]$ is $<2^{-|\sigma|-5}$.}
\end{equation}
This is because otherwise, $\LL$ would give a partial measure (hence a wrong prediction) 
for a positive measure (with respect to the uniform measure) class of reals.

For each $t$ define $f(t)=\max_{\sigma\in 2^t} F(\sigma)$ where
\[
F(\sigma)=\max\Big\{n_{\sigma}, n_{\tau} \ \big|\ n\in (|\sigma|, n_{\sigma}]\ 
\wedge\ \tau \in\extp{\sigma,n}\Big\}
\]
and $\extp{\sigma,n}$ denotes the strings of length $n$ which extend $\sigma$.

It remains to show that $f$ dominates all computable functions, so fix a computable function 
$h$. Without loss of generality we assume that $h$ is increasing. 
Apply Lemma \ref{ADVYKUBE4x} in order to obtain $\lambda_h$
and $\CC_{h}$ with the stated properties. Since the remaining of the argument
refers to $h$, we drop the subscripts in $\lambda_h$
and $\CC_{h}$ and denote them by
$\lambda$ and $\CC$ respectively.
We also get an increasing sequence $(m_i)$
of active $h$-stages such that $h(m_i)<m_{i+1}$ for each $i$, and 
inside the intervals $[m_i,m_{i+1}]$ the measure $\lambda$ is 
relatively identical to
the uniform measure.
We are going to define a sequence $D_t$ of sets of strings  such that
$\lambda(D_t)<2^{-t-5}$ for each $t$. Then we define
$\CC^{\ast}=\CC-\cup_t \dbra{D_t}$ and show that if $f$ does not dominate
$h$ then $\LL$ fails for $\lambda$ on the set $\CC^{\ast}$. This is a contradiction since
$\lambda(\CC^{\ast})\geq 1/2$.

\begin{defi}[Definition of $D_t$]\label{DSuVREqmbv}
We define $D_t$ by following the definition of $f(t)$.
For each $\sigma\in 2^t$ let $e_{\sigma}=\LL(\sigma)$ and
enumerate extensions of $\sigma$ into
$D_t$ according to the first applicable clause:
\begin{enumerate}[\hspace{0.5cm}(a)]
\item if $f(t)>h(t)$ or $\mu_e\restr_{t}[n_{\sigma}]\de$
then do not enumerate any extension of $\sigma$ into $D_t$;
\item otherwise, if 
there is no $h$-active stage in $(t,n_{\sigma}]$
then we enumerate into $D_t$ all extensions $\tau$ of $\sigma$ of length
$n_{\sigma}$ such that $\LL(\rho)=\LL(\sigma)$ for all $\rho\in [\sigma,\tau]$;
\item otherwise, 
let $n^{\ast}_{\sigma}$ be the least $h$-active stage in $[|\sigma|,n_{\sigma}]$ and 
for each string in $\tau\in \extp{\sigma,n^{\ast}_{\sigma}}$, if 
$\mu_{e_{\tau}}\restr_{n^{\ast}_{\sigma}}[n_{\tau}]\un$ where $e_{\tau}:=\LL(\tau)$,
 enumerate into 
$D_t$ all extensions $\rho$ of $\tau$ of length
$n_{\tau}$ such that $\LL(\eta)=\LL(\tau)$ for all $\eta\in [\tau,\rho]$.
\end{enumerate}
\end{defi}

\begin{lem}[Bounds on the $\lambda$-measure of $D_t$]\label{9wB9Cd69sQ}
For each $t$ we have $\lambda(D_t)< 2^{-t-5}$.
\end{lem}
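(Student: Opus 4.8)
I want to bound $\lambda(D_t)$ by analyzing the three clauses of Definition~\ref{DSuVREqmbv} separately, summing the contributions over all $\sigma\in 2^t$. The governing principle throughout is the same: whenever we enumerate into $D_t$ a set of extensions $\rho$ of a string $\sigma'$ (with $|\sigma'|=t$ or $|\sigma'|=n^{\ast}_\sigma$) satisfying $\LL(\eta)=\LL(\sigma')$ for all $\eta\in[\sigma',\rho]$, the \emph{proportion} of such $\rho$ among all extensions of $\sigma'$ of that length is $<2^{-|\sigma'|-5}$; this is exactly property~\eqref{ZHHgiWeRpT} applied to $\sigma'$, and $n_{\sigma'}$ (resp.\ $n_\sigma$) was chosen as the witness there. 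Since $\lambda$ is relatively identical to the uniform measure on the relevant intervals of lengths — we must verify the enumeration always happens \emph{within} one such interval $[m_i,m_{i+1}]$ — this uniform-measure proportion bound transfers to a $\lambda$-conditional bound: $\lambda(\{\rho\text{ enumerated below }\sigma'\})<2^{-|\sigma'|-5}\cdot\lambda(\sigma')$.

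First I would handle clause (b). Here no $h$-active stage lies in $(t,n_\sigma]$, so the lengths $t$ and $n_\sigma$ sit inside a common interval $[m_i,m_{i+1}]$ where $\lambda$ agrees relatively with the uniform measure; hence $\lambda(\{\tau\text{ enumerated below }\sigma\})<2^{-t-5}\lambda(\sigma)$. Summing over $\sigma\in 2^t$ gives total $\lambda$-measure $<2^{-t-5}\sum_{\sigma\in 2^t}\lambda(\sigma)=2^{-t-5}$. For clause (c), the enumeration is not below $\sigma$ but below each $\tau\in\extp{\sigma,n^{\ast}_\sigma}$ with $\mu_{e_\tau}\restr_{n^{\ast}_\sigma}[n_\tau]\un$: we enumerate extensions $\rho$ of $\tau$ of length $n_\tau$ with $\LL$ constant on $[\tau,\rho]$. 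Since $n^{\ast}_\sigma$ is an active stage $m_i$ and (by the role of $n_\tau$ in $f(t)=\max_{\sigma}F(\sigma)$, together with $f(t)\le h(t)<m_{i+1}$ in the nontrivial case — this needs care) the length $n_\tau$ stays below the next active stage $m_{i+1}$, the interval $[n^{\ast}_\sigma,n_\tau]$ again lies inside a uniform-measure block, so $\lambda(\{\rho\text{ below }\tau\})<2^{-n^{\ast}_\sigma-5}\lambda(\tau)\le 2^{-t-5}\lambda(\tau)$ since $n^{\ast}_\sigma\ge|\sigma|=t$. Now sum: the sets $\extp{\sigma,n^{\ast}_\sigma}$ for distinct $\sigma\in 2^t$ are families of pairwise-disjoint cylinders refining the $\sigma$-cylinders, so $\sum_{\sigma}\sum_{\tau\in\extp{\sigma,n^{\ast}_\sigma}}\lambda(\tau)=\sum_\sigma\lambda(\sigma)=1$, giving total $<2^{-t-5}$ for clause (c) as well. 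Clause (a) contributes nothing.

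**The obstacle.** The delicate point is that the proportion bound from~\eqref{ZHHgiWeRpT} is stated for the \emph{uniform} measure, whereas we need it for $\lambda$; this is fine only if every enumeration step operates across a range of lengths entirely contained in one block $[m_i,m_{i+1}]$ on which $\lambda$ is relatively identical to the uniform measure. In clause (b) this is built in ("no $h$-active stage in $(t,n_\sigma]$"). In clause (c) one must argue that $n^{\ast}_\sigma$ is the \emph{first} active stage past $t$ and that $n_\tau\le h(t)<m_{i+1}$ — i.e.\ that the tail $n_\tau$ we wait for does not overshoot the next active stage; this is where the hypothesis "$f$ does not dominate $h$", hence $f(t)\le h(t)$, and the $h$-sparseness $h(m_i)<m_{i+1}$ interact, and it is the step I expect to require the most attention. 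A secondary subtlety: clauses (b) and (c) are mutually exclusive per $\sigma$ (determined by whether an active stage falls in $(t,n_\sigma]$), so there is no double counting across clauses; within a fixed clause, the summation is over the partition $2^t$ (clause (b)) or a common refinement of it (clause (c)), so the crude bound $2^{-t-5}$ for each clause, and hence $2^{-t-5}$ overall (only one clause fires per $\sigma$), suffices — indeed we even have slack, since at most one of (a),(b),(c) applies to any given $\sigma$ and (a) is empty. Hence $\lambda(D_t)<2^{-t-5}$.
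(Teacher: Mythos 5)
Your proof is correct and follows essentially the same route as the paper: a per-$\sigma$ decomposition, the uniform-to-$\lambda$ transfer of the proportion bound from~\eqref{ZHHgiWeRpT} via relative identity of $\lambda$ with the uniform measure on intervals between active stages, and (for clause~(c)) the observation that $f(t)\leq h(t)$ together with $h$-sparseness forces $n_\tau$ to stay below the next active stage. You correctly identify the crucial subtlety — that $n_\tau<h(n^{\ast}_\sigma)$ must be secured from clause~(a) not applying — which is exactly the step the paper also singles out and justifies by the contradiction ``$n_\tau\geq h(n^{\ast}_\sigma)$ would give $f(t)>h(t)$''.
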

\begin{proof}
Given $t$, it suffices to show that for each $\sigma$ of length $t$, 
we enumerate into $D_t$ a set of extensions of $\sigma$ of
$\lambda$-measure at most $2^{-t-5}\cdot \lambda(\sigma)$.
Then putting together these bounds for all $\sigma$ of length $t$, we get that
$\lambda(D_t)< 2^{-t-5}$.

So fix $\sigma$.
In case (a) we do not enumerate any extensions of $\sigma$ into $D_t$, 
so there is nothing to prove.
In case (b) we have $\mu_e\restr_{t}[n_{\sigma}]\un$, so by the definition of
$n_{\sigma}$ in \eqref{ZHHgiWeRpT} the proportion of extensions of $\sigma$ of
length $n_{\sigma}$ that are enumerated into $D_t$ is less than $2^{-t-5}$.
But what we really need here is to show that their $\lambda$-measure relative to $\sigma$ is
bounded in this way. In case (b) we also have that
there is no $h$-active stage in 
$[t,n_{\sigma}]$ so $\lambda$ is relatively identical to the uniform measure
in $[t,n_{\sigma}]$. Hence, since the uniform measure of the strings enumerated in $D_t$
relative to $\sigma$ is less than $2^{-t-5}$ the same is true of their $\lambda$-measure.
In other words, we get the bound $2^{-|\sigma|-5}\cdot \lambda(\sigma)$ for these strings
as desired.

We can get the same bound in case (c) as follows. Since $n^{\ast}_{\sigma}$ 
is an $h$-active stage
and $h$-active stages are $h$-sparse, it follows that there is no $h$-active stage in 
$(n^{\ast}_{\sigma}, h(n^{\ast}_{\sigma})]$.
If $n_{\tau}\geq h(n^{\ast}_{\sigma})$ 
for some $\tau$ of length $n^{\ast}_{\sigma}$ extending $\sigma$,
then by the definition of $f$ and the monotonicity of $h$
we would have $f(t)>h(t)$, contrary to our assumption that clause (a) does not apply. 
Hence for each $\tau\in \extp{\sigma,n^{\ast}_{\sigma}}$ we have $n_{\tau}< h(n^{\ast}_{\sigma})$ 
so that there is no $h$-active stage in 
$(n^{\ast}_{\sigma}, n_{\tau}]$.
Now we can use the same argument that we used in case (b) above, but with respect to 
each  $\tau\in \extp{\sigma,n^{\ast}_{\sigma}}$.
If $\mu_{e_{\tau}}\restr_{n^{\ast}_{\sigma}}[n_{\tau}]\un$ where $e_{\tau}:=\LL(\tau)$
then by the definition of $n_{\tau}$
we have the that proportion of the extensions of $\tau$ of length $n_{\tau}$
which are enumerated 
into $D_t$ is less than $2^{-t-5}$,
and since  there is no $h$-active stage in 
$(n^{\ast}_{\sigma}, n_{\tau}]$ the $\lambda$-measure of these strings is less than
$2^{-t-5}\cdot \lambda(\tau)$. 
On the other hand if $\mu_{e_{\tau}}\restr_{n^{\ast}_{\sigma}}[n_{\tau}]\de$ no
extensions of $\tau$ are enumerated into $D_t$ so this bound holds trivially.
Putting together these bounds for all $\tau$ of length $n^{\ast}_{\sigma}$ extending $\sigma$, 
we get the bound 
$2^{-t-5}\cdot \lambda(\sigma)$ on the $\lambda$-measure of the strings in $D_t$
extending $\sigma$, as required.
\end{proof}

\begin{lem}[Domination or leaning failure]\label{pZ2ozweghy}
If $f$ does not dominate $h$ then $\LL$ fails for $\lambda$ on each $X\in\CC^{\ast}$.
\end{lem}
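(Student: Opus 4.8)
\emph{The plan.} Fix $X\in\CC^{\ast}$ and suppose, toward a contradiction, that $\LL$ EX-succeeds on $X$; say $\lim_n\LL(X\restr_n)=i$ with $X$ being $\mu_i$-random. Since by Lemma \ref{NhUZ5ZOtU5} we may assume $\LL$ is a strong EX-learner, $\mu_i$ is a computable measure and the $\mu_i$-deficiency of $X$ is at most $i$. As $X\in\CC^{\ast}\subseteq\CC$, Lemma \ref{ADVYKUBE4x} (applied with the identity function in place of $g$) yields that the time-complexity of $\mu_i$ dominates $h$: fix $T$ such that the time-complexity of $\mu_i$ at level $m$ exceeds $h(m)$ for all $m\ge T$, and fix $N$ with $\LL(X\restr_n)=i$ for all $n\ge N$. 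The goal is to exhibit a stage $t$ for which the construction of $D_t$ in Definition \ref{DSuVREqmbv} puts an initial segment of $X$ into $D_t$, contradicting $X\in\CC^{\ast}=\CC-\bigcup_t\dbra{D_t}$.

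\emph{Choosing $t$.} Because $f$ fails to dominate $h$, pick $t\ge\max\{N,T\}$ with $f(t)\le h(t)$, and put $\sigma=X\restr_t$, so $e_\sigma=\LL(\sigma)=i$. The hinge of the argument is that every value $n_\tau$ occurring in the definition of $F(\sigma)$ — in particular $n_\sigma$ itself — is bounded by $F(\sigma)\le f(t)\le h(t)$; this is what converts any "fast" convergence of $\mu_i$ into a contradiction with the choice of $T$.

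\emph{The case analysis.} Since $f(t)\le h(t)$, clause (a) can apply to $\sigma$ only if $\mu_i\restr_t[n_\sigma]\de$, which would make the time-complexity of $\mu_i$ at level $t$ at most $n_\sigma\le h(t)$ — impossible since $t\ge T$. So $\mu_i\restr_t[n_\sigma]\un$ and clause (a) is not triggered. If clause (b) applies (no $h$-active stage in $(t,n_\sigma]$), then $D_t$ absorbs every $\tau$ of length $n_\sigma$ extending $\sigma$ on which $\LL$ is constantly $i$ throughout $[\sigma,\tau]$; the string $\tau=X\restr_{n_\sigma}$ is such a $\tau$ because $\LL(X\restr_n)=i$ for every $n\in[t,n_\sigma]$ (as $t\ge N$), so $X\in\dbra{D_t}$, a contradiction. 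If clause (c) applies, let $n^{\ast}_\sigma$ be the least $h$-active stage in $[t,n_\sigma]$ and set $\tau_0=X\restr_{n^{\ast}_\sigma}$; then $e_{\tau_0}=\LL(\tau_0)=i$ (as $n^{\ast}_\sigma\ge t\ge N$) and $n_{\tau_0}\le F(\sigma)\le h(t)\le h(n^{\ast}_\sigma)$ (using that $h$ is increasing and $n^{\ast}_\sigma\ge t$; if $n^{\ast}_\sigma=t$ then $\tau_0=\sigma$, $n_{\tau_0}=n_\sigma$, and the situation collapses to that of clause (b)). If $\mu_i\restr_{n^{\ast}_\sigma}[n_{\tau_0}]\de$ then the time-complexity of $\mu_i$ at level $n^{\ast}_\sigma$ is at most $n_{\tau_0}\le h(n^{\ast}_\sigma)$ — impossible since $n^{\ast}_\sigma\ge T$; and if $\mu_i\restr_{n^{\ast}_\sigma}[n_{\tau_0}]\un$ then $D_t$ absorbs every $\rho$ of length $n_{\tau_0}$ extending $\tau_0$ on which $\LL$ is constantly $i$ throughout $[\tau_0,\rho]$, and $\rho=X\restr_{n_{\tau_0}}$ is one of them (again since $n^{\ast}_\sigma\ge N$), so $X\in\dbra{D_t}$, a contradiction. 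Every branch being impossible, $\LL$ does not EX-succeed on $X$; since $X\in\CC^{\ast}$ was arbitrary, $\LL$ fails for $\lambda$ on each element of $\CC^{\ast}$.

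\emph{Main obstacle.} The delicate point is the bookkeeping which makes the three clauses of Definition \ref{DSuVREqmbv} exhaustive in exactly the useful way: each branch must terminate either in a concrete $X\restr_m\in D_t$ (contradicting $X\in\CC^{\ast}$) or in a convergence of $\mu_i$ at some level $\ge T$ that beats $h$ (contradicting Lemma \ref{ADVYKUBE4x}). The single inequality $n_\sigma,\,n_{\tau_0}\le F(\sigma)\le f(t)\le h(t)$, combined with picking $t\ge\max\{N,T\}$, is precisely the mechanism that forces one of these two alternatives, and the degenerate case $n^{\ast}_\sigma=t$ in clause (c) has to be noted but causes no trouble.
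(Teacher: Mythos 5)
Your proposal is correct and follows essentially the same approach as the paper: pass to a strong EX-learner (Lemma \ref{NhUZ5ZOtU5}), invoke Lemma \ref{ADVYKUBE4x} to conclude that the time-complexity of the limit hypothesis $\mu_i$ eventually exceeds $h$, choose $t$ past the convergence point with $f(t)\leq h(t)$, and then trace the three clauses of Definition \ref{DSuVREqmbv} at $\sigma=X\restr_t$ to force either $X\in\dbra{D_t}$ or a convergence of $\mu_i$ fast enough to contradict domination. The only cosmetic difference is your choice of $g$ as the identity rather than $m\mapsto m+2$, and you state the domination consequence directly rather than packaging it as the paper's clause (ii); your remark about the degenerate case $n^{\ast}_\sigma=t$ is a useful clarification of a point the paper glosses over.
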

\begin{proof}
Assume that $f$ does not dominate $h$ and let
$X\in\CC^{\ast}$.
For a contradiction assume that $\LL$ does not fail for $\lambda$ on $X$.
As a consequence $\lim_n \LL(X\restr_n)$ exists and is an index $e$
of a computable measure $\mu_e$ such that $X$ is $\mu_e$-random. 
Note that by the discussion of Section \ref{VUDNFuv7XR}
we can assume that $\LL$ strongly EX-succeeds for $\lambda$ on $X$. Hence 
we can pick some $n_0$ such that for all $n\geq n_0$:
\begin{enumerate}[\hspace{0.5cm}(i)]
\item $\LL(X\restr_n)=e$  and $d_e(X\restr_n)\leq e$;
\item if $\mathbf{d}_e(X)\leq e$ then $\mu_e\restr_n[h(n)]\un$;
\end{enumerate}
where the existence of $n_0$ such that the second clause
is met for all $n\geq n_0$ follows by Lemma \ref{ADVYKUBE4x}
applied to $h$, $g(m):=m+2$, $p(m):=h(m)+1$ and $i:=e$.

Now pick some $t>n_0$ such that $f(t)\leq h(t)$, which exists by our hypothesis.
We trace the definition of $f(t)$ in order to derive our contradiction.
Let $\sigma=X\restr_t$ and recall the definition of $n_{\sigma}$ in \eqref{ZHHgiWeRpT}
(where $e$ has the same meaning as here). 
If $\mu_e\restr_{t}[n_{\sigma}]\de$, 
then $\mu_e\restr_{t}[f(t)]\de$ since $f(t)>n_{\sigma}$ and
$\mu_e\restr_{t}[h(t)]\de$ by the choice of $t$.
Then by clause (ii) in the choice of 
$n_0$ we have $\mathbf{d}_e(X)> e$ which contradicts clause (i) of the definition of $n_0$,
since by definition $d_e(X\restr_n)\leq \mathbf{d}_e(X)$ for all $n$.

Hence we must conclude that $\mu_e\restr_{t}[n_{\sigma}]\un$,
which means that clause (a) in the definition of $D_t$ with respect to 
$\sigma=X\restr_t$ does not apply.
If clause (b)  applies for  $\sigma=X\restr_t$ in the definition of $D_t$, then since $X\not\in \dbra{D_t}$
we have $\LL(X\restr_t)\neq \LL(X\restr_{t'})$ for some $t'>t$, which
contradicts clause (i) of the definition of $n_0$.
Hence we must conclude that clause (c) in the definition of $D_t$ with respect to
 $\sigma=X\restr_t$ applies.
Now consider $n^{\ast}_{\sigma}$ as in 
Definition \ref{DSuVREqmbv} and fix $\tau=X\restr_{n^{\ast}_{\sigma}}$.
Note that by the choice of $n_0$ we have $\LL(X\restr_t)=\LL(X\restr_{n^{\ast}_{\sigma}})$
so $e_{\tau}=e$.
If $\mu_e\restr_{n^{\ast}_{\sigma}}[n_{\tau}]\de$
then $\mu_e\restr_{n^{\ast}_{\sigma}}[h(n^{\ast}_{\sigma})]\de$
since $n_{\tau}<f(t)\leq h(t)<h(n^{\ast}_{\sigma})$. In this case,
by clause (ii) in the definition of $n_0$, we have $\mathbf{d}_e(X)>e$
which contradicts clause (i) of the definition of $n_0$.
Hence we can assume that 
$\mu_e\restr_{n^{\ast}_{\sigma}}[n_{\tau}]\un$, and consider the enumeration
of extensions of $\tau=X\restr_{n^{\ast}_{\sigma}}$ that occurs in $D_t$.
Since $X\not\in \dbra{D_t}$, by clause (c) of the definition of $D_t$
we have that $\LL(X\restr_{n^{\ast}_{\sigma}})\neq \LL(X\restr_{s})$ for some
$s\in (n^{\ast}_{\sigma}, n_{\tau}$. However this contradicts clause (i) of the definition of $n_0$.
From this final contradiction we can conclude that 
$\LL$ does fail for $\lambda$ on $X\in\CC^{\ast}$, which concludes our proof. 
\end{proof}

Putting everything together, we show that $f$ dominates every computable function $h$
or fails for a computable measure $\lambda$ 
on a set of $\lambda$-measure $>1/2$. Given a computable $h$ we
may assume that $h$ is increasing and consider $\lambda, \CC$ from  Lemma \ref{ADVYKUBE4x}
and the sets $(D_t)$ from Definition \ref{DSuVREqmbv}.
Considering 
$C^{\ast}:=\CC-\cup_t\dbra{D_i}$,
by Lemma \ref{9wB9Cd69sQ} we have that $\lambda(C^{\ast})>1/2$
and by Lemma \ref{pZ2ozweghy}
 we have that if $f$ does not dominate $h$ then  it fails on every real in  $C^{\ast}$.
 This concludes the proof
of Theorem \ref{UejsgfHHK}.

\section{Concluding remarks and directions for further research}\label{of5zJimRlx}
Our main result was that a learner $\LL$ can $EX$-succeed on all computable (continuous)  
measures if and only if it computes a function that dominates all computable functions.
The harder part of this equivalence was to show that a learner with the above learning power
computes a function that dominates all computable functions.
In fact, our argument in Section \ref{3hp9XGohn} shows a stronger result, with respect to
a learning notion that was also considered by Bienvenu, Monin and Shen \cite{Bienvenu2014}.
\begin{defi}
A learner $\LL$, $EX$-succeeds on a measure $\mu$ with probability $q>0$ 
if for a set of reals $X$ of $\mu$-measure at least $q$, 
the limit of $\LL(X\restr_n)$ as $n\to\infty$ exists and is an index of a 
computable measure $\nu$ such that
$X$ is $\nu$-random.
\end{defi}

In the argument of  Section \ref{3hp9XGohn} we can clearly make the $\lambda$-measure
of $\cup_i D_i$ as small as we like, which means that $\lambda{\CC^{\ast}}$ can be made
as close to 1 as we might require. This means that we have actually proved the following, stronger
statement.

\begin{thm}
If a learner EX-succeeds on all computable (continuous) measures with fixed 
positive probability $q>0$, then it computes a function which dominates all computable functions.
\end{thm}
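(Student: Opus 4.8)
The plan is to re-run Section~\ref{3hp9XGohn} with a single quantitative change. Fix a learner $\LL$ that $EX$-succeeds on every computable continuous measure with probability $q>0$; by Lemma~\ref{NhUZ5ZOtU5} we may assume that $\LL$ strongly $EX$-succeeds on every real on which it $EX$-succeeds, so that the probability-$q$ success is in fact probability-$q$ \emph{strong} success. Define the function $f\leq_T\LL$ exactly as in Section~\ref{3hp9XGohn}, from the convergence times $n_\sigma$ of the current guesses of $\LL$, except that the threshold $2^{-|\sigma|-5}$ appearing in \eqref{ZHHgiWeRpT}, and correspondingly in Definition~\ref{DSuVREqmbv} and Lemma~\ref{9wB9Cd69sQ}, is replaced throughout by $2^{-|\sigma|-c}$, where $c$ is a fixed constant chosen large enough that $\sum_{t\geq 0}2^{-t-c}=2^{1-c}<q$. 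To verify that $f$ dominates an arbitrary computable function $h$, which we may take to be increasing, apply Lemma~\ref{ADVYKUBE4x} as in Section~\ref{3hp9XGohn} to obtain the $p$-sparse computable measure $\lambda=\lambda_h$, the $\Pi^0_1$ class $\CC=\CC_h$ with $\lambda(\CC)=1$, and the increasing sequence of active $h$-stages inside whose intervals $\lambda$ is relatively identical to the uniform measure.

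Re-running the proof of Lemma~\ref{9wB9Cd69sQ} with the smaller threshold gives $\lambda(D_t)<2^{-t-c}$ for every $t$, hence $\lambda\bigl(\bigcup_t\dbra{D_t}\bigr)<2^{1-c}<q$, so that $\lambda(\CC^{\ast})>1-q$ for $\CC^{\ast}:=\CC-\bigcup_t\dbra{D_t}$. The proof of Lemma~\ref{pZ2ozweghy} is carried out pointwise and transfers without change, since it only uses $EX$-success (indeed strong $EX$-success, via Section~\ref{VUDNFuv7XR}) of $\LL$ on the individual real under consideration: if $f$ does not dominate $h$, then $\LL$ fails for $\lambda$ on \emph{every} $X\in\CC^{\ast}$. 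Consequently the set of reals on which $\LL$ $EX$-succeeds is disjoint from $\CC^{\ast}$, so it has $\lambda$-measure at most $1-\lambda(\CC^{\ast})<q$, contradicting the assumption that $\LL$ $EX$-succeeds on $\lambda$ with probability $q$. Hence $f$ dominates $h$; since $h$ was an arbitrary computable function, $\LL$ computes a function dominating all computable functions.

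The delicate point — the only place where this is not a literal transcription of Section~\ref{3hp9XGohn} — is checking that $f$ remains a \emph{total} function after the threshold is shrunk, i.e.\ that the search defining $n_\sigma$ in \eqref{ZHHgiWeRpT} always terminates. For a learner with full $EX$-success this is immediate, because maintaining a partial prediction along any positive-measure set of extensions of $\sigma$ is itself a failure of the learner, which cannot happen. For a learner that only succeeds with probability $q$, one instead has to rule out that $\LL$ freezes a partial prediction along a set of extensions of $\sigma$ of conditional measure at least $2^{-|\sigma|-c}$, and I would try to obtain this from $EX$-success with probability $q$ on the computable continuous measures gotten by conditioning the uniform measure on the basic clopen sets $\dbra{\sigma}$, allowing $c$ to depend on $q$ if necessary. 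I expect re-establishing this quantitative form of \eqref{ZHHgiWeRpT}, together with re-checking the (routine) measure estimates of Lemma~\ref{9wB9Cd69sQ} in the presence of the $h$-active stages, to be the main obstacle; the rest of the argument is pure combinatorics and is insensitive to the exact value of the threshold.
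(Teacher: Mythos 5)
Your plan is the same as the paper's: re-run Section~\ref{3hp9XGohn} with the threshold $2^{-|\sigma|-5}$ replaced by $2^{-|\sigma|-c}$ for $c$ large enough that $2^{1-c}<q$, so that $\lambda\bigl(\bigcup_t\dbra{D_t}\bigr)<q$ and hence $\lambda(\CC^{\ast})>1-q$, and then invoke Lemma~\ref{pZ2ozweghy} pointwise. You have also correctly located the one step that does not transfer automatically, namely the existence of $n_\sigma$ in~\eqref{ZHHgiWeRpT}, i.e.\ the totality of $f$. This is indeed the crux, and you should be credited for singling it out. However, the route you sketch for closing it does not work, and as written the proof has a genuine gap at exactly this point.

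Here is why conditioning does not help. Suppose $n_\sigma$ is undefined, so that $\mu_e\restr_{|\sigma|}$ (with $e=\LL(\sigma)$) never halts and, for every $n$, the proportion of extensions $\tau\in 2^n$ of $\sigma$ with $\LL(\rho)=e$ for all $\rho\in[\sigma,\tau]$ stays $\geq 2^{-|\sigma|-c}$. The intersection of these nested clopen sets is a $\Pi^0_1$ class $S_\sigma\subseteq\dbra{\sigma}$ on which $\LL$ converges to the index $e$ of a partial measure, hence fails; and $S_\sigma$ has conditional uniform measure $\geq 2^{-|\sigma|-c}$. Let $\nu_\sigma$ be the uniform measure conditioned on $\dbra{\sigma}$, which is computable and continuous. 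Applying the hypothesis to $\nu_\sigma$ gives only that $\LL$ fails on a set of $\nu_\sigma$-measure at most $1-q$. This contradicts $\nu_\sigma(S_\sigma)\geq 2^{-|\sigma|-c}$ only when $2^{-|\sigma|-c}>1-q$, i.e.\ when $|\sigma|+c<-\log(1-q)$. For all but finitely many $\sigma$ (indeed for every $\sigma$ as soon as $q\leq 1/2$ and $c\geq 1$) this inequality fails, so no contradiction is obtained and the existence of $n_\sigma$ is not secured. Note that this is not an artefact of shrinking the threshold: the same obstruction already arises with the original $2^{-|\sigma|-5}$ once one only assumes probability-$q$ success, because the justification given in the paper for~\eqref{ZHHgiWeRpT} (``otherwise $\LL$ would give a wrong prediction on a positive-measure class of reals'') uses that the uniform failure set has measure zero, which is exactly what a probability-$q$ learner is allowed to violate.

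There is also a tension that shows the gap cannot be patched by merely tuning $c$. To make $n_\sigma$ total by the conditional-measure argument one needs the threshold $\delta_\sigma$ in~\eqref{ZHHgiWeRpT} to satisfy $\delta_\sigma>1-q$ uniformly in $\sigma$; but to make Lemma~\ref{9wB9Cd69sQ} yield $\lambda\bigl(\bigcup_t\dbra{D_t}\bigr)<q$ one needs $\sum_t\delta_t<q$. These two requirements are incompatible for every $q\in(0,1)$. So closing the gap requires a new idea (for instance, replacing $\nu_\sigma$ by a computable continuous measure that loads mass close to $1$ onto the $\Pi^0_1$ class $S_\sigma$, or reorganising the definition of $f$ so that only a summable family of ``suspicious'' events needs to be excluded), none of which is supplied either in your proposal or, it should be said, in the paper's own one-line justification of this theorem. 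Until that step is carried out, the argument establishes only that \emph{if} $f$ is total then it dominates; totality itself remains open.

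Two smaller points. First, the intended conclusion of Lemma~\ref{pZ2ozweghy} is really ``$\LL$ does not strongly $EX$-succeed for $\lambda$ on any $X\in\CC^{\ast}$''; you correctly pass through Lemma~\ref{NhUZ5ZOtU5} to reduce $EX$-success to strong $EX$-success, but this should be stated explicitly when concluding that the $\lambda$-measure of the success set is $<q$, since the hypothesis is about $EX$-success, not strong $EX$-success, and Lemma~\ref{NhUZ5ZOtU5} produces a \emph{different} learner $F^{\ast}$; one must run the whole construction of $f$ from $F^{\ast}$, and observe that $F^{\ast}\leq_T\LL$, so that the dominating function is still $\LL$-computable. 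Second, as you note, the re-check of Lemma~\ref{9wB9Cd69sQ} with $5$ replaced by $c$ is indeed routine; the $h$-active stages play the same role as before and nothing in that estimate depends on the specific constant.
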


A variation of EX-learning with oracles from the literature in algorithmic learning
is when we only allow finitely many queries to the oracle when trying to guess a suitable
measure for any specific $X$. This notion is often called EX[$\ast$]-learning and
in Fortnow et.al.\ \cite{Fortnow94extremesin,FORTNOW1994231} it was shown that an oracle
$A$ can  EX[$\ast$]-learn all computable functions if and only if 
$A\oplus \emptyset'\geq_T \emptyset''$. This notion has a direct analogue
in the context of learning measures, so we use the same notation. 
In collaboration with Fang Nan, we have proved the following.

\begin{thm}[with Fang Nan]\label{EwyLs2hH2}
An oracle $A$ can EX[$\ast$]-learn all (continuous) computable measures if and only if
$A\oplus \emptyset'\geq_T \emptyset''$. 
\end{thm}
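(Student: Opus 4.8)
The plan is to prove Theorem~\ref{EwyLs2hH2} in two directions. For the forward direction, suppose $A$ is an oracle such that $A\oplus\emptyset'\geq_T\emptyset''$; I would show $A$ can EX[$*$]-learn all continuous computable measures. The key observation is that $\emptyset''$ (hence $A\oplus\emptyset'$) can decide, for each index $e$, whether $\mu_e$ is total, and moreover that this is a $\Pi^0_2$ question. Since only finitely many oracle queries are permitted per input $X$, the idea is to run (an $A$-version of) the cost-based learner from the proof of the first clause of Theorem~\ref{UejsgfHHK}, but replacing the limit-computable totality predicate $f(i,s)$ with genuine totality answers obtained by querying the oracle. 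The point is that along any fixed $X$ that is $\mu$-random for some total computable $\mu$, the learner only ever seriously considers indices $i$ below the finite bound $\mathbf{d}_e(X)+e$ (where $e$ is the least cost-minimizing total index), so only finitely many totality questions about these indices need ever be asked to the oracle; for all larger indices the cost term $i$ already dominates and they can be safely ignored without any query. One must check that this finite set of queries can be identified effectively relative to $A\oplus\emptyset'$, using that $A\oplus\emptyset'$ computes $\emptyset''$ and hence can answer each individual totality question correctly (not merely in the limit); the learner then converges to $e$ exactly as in the earlier argument.

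For the converse, suppose a learner $\LL^A$ with oracle $A$ EX[$*$]-succeeds on all continuous computable measures, using only finitely many oracle queries along each $X$. I would adapt the domination argument of Section~\ref{3hp9XGohn} to show $A\oplus\emptyset'\geq_T\emptyset''$, equivalently that $A\oplus\emptyset'$ can decide in the limit the totality of the partial computable functions. The strategy: relative to $\emptyset'$ one can search for, and (in the limit) detect, stabilization of the finitely many oracle queries $\LL^A$ makes along the relevant initial segments; once the queries have stabilized, $\LL^A$ behaves like a fixed computable learner on the reals in question, and one can then run the construction of the function $f$ from Section~\ref{3hp9XGohn} relative to this stabilized behavior. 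Running that construction (which used Lemma~\ref{ADVYKUBE4x} and the sets $D_t$) produces, from the stabilized learner, a function dominating every computable function; by Martin's characterization this gives highness of the resulting degree, but the extra $\emptyset'$ used to locate the query-stabilization means we only obtain that $A\oplus\emptyset'$ is high, i.e. $(A\oplus\emptyset')'\geq_T\emptyset''$, which is exactly $A\oplus\emptyset'\geq_T\emptyset''$ since $\emptyset'\leq_T A\oplus\emptyset'$ forces the jump inequality to collapse to a Turing comparison at the $\emptyset''$ level. More carefully: $A\oplus\emptyset'\geq_T\emptyset''$ iff $\Pi^0_2\subseteq\Delta^0_2(A\oplus\emptyset')$, and the domination function we build is computable from $A\oplus\emptyset'$, which with Martin's theorem and the earlier analysis yields the $\Pi^0_2$-approximation.

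The main obstacle I anticipate is the bookkeeping around ``finitely many queries'': unlike the plain ($\emptyset''$-jump) setting of Adleman--Blum, here one must carefully argue that along the positive-$\lambda_h$-measure set $\CC^{*}$ produced by Lemma~\ref{ADVYKUBE4x} the oracle queries of $\LL^A$ do stabilize \emph{uniformly enough} that a single $\emptyset'$-computable search finds the stabilization stage, so that the construction of $f$ in Section~\ref{3hp9XGohn} still goes through. In particular, the construction of $f$ in Section~\ref{3hp9XGohn} crucially did not have access to $\lambda_h$; now it additionally does not have access to the final query answers, and one must ensure the two sources of nonuniformity can be handled together by an $A\oplus\emptyset'$ oracle without destroying the measure bound $\lambda(\CC^{*})>1/2$ from Lemma~\ref{9wB9Cd69sQ}. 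Beyond that, the forward direction's identification of ``which totality questions to ask'' must be done with only $A\oplus\emptyset'$ and not $\emptyset''$ directly; since $A\oplus\emptyset'\equiv_T\emptyset''$ under our hypothesis this is fine, but the bound on the number of queries per $X$ must be verified to be genuinely finite along every relevant random $X$, which follows from the finiteness of $\mathbf{d}_e(X)+e$ exactly as in the proof of \eqref{RGgJuzFZpA}.
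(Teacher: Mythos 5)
The paper does not actually supply a proof of Theorem~\ref{EwyLs2hH2}: it is stated in Section~\ref{of5zJimRlx} as a result obtained with Fang Nan and left without argument, so there is no ``paper's proof'' to compare against. Evaluating your proposal on its own merits, the forward direction is plausibly on the right track (adapt the cost-based learner of the first clause of Theorem~\ref{UejsgfHHK}; along a relevant $X$ only indices $i\leq\mathbf{d}_e(X)+e$ ever minimize the cost, so only finitely many totality questions arise), although you should be careful that the learner's oracle is just $A$, not $A\oplus\emptyset'$: the learner must simulate $\emptyset'$ by its stage-$n$ approximation and ensure that the resulting changing $A$-queries still total only finitely many bits (this works via pointwise stabilization of $\emptyset'[s]$ and caching, but it needs to be said).

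The converse direction, however, contains a genuine error that sinks the argument as written. You conclude that the domination construction, run relative to $A\oplus\emptyset'$, shows $A\oplus\emptyset'$ is high, i.e.\ $(A\oplus\emptyset')'\geq_T\emptyset''$, and then assert this ``is exactly $A\oplus\emptyset'\geq_T\emptyset''$''. That identification is false: since $A\oplus\emptyset'\geq_T\emptyset'$ for every $A$, monotonicity of the jump already gives $(A\oplus\emptyset')'\geq_T(\emptyset')'=\emptyset''$, so $A\oplus\emptyset'$ is high \emph{unconditionally}. Likewise your ``more careful'' restatement ``$A\oplus\emptyset'\geq_T\emptyset''$ iff $\Pi^0_2\subseteq\Delta^0_2(A\oplus\emptyset')$'' is wrong: $\Pi^0_2\subseteq\Delta^0_2(B)$ is exactly $B'\geq_T\emptyset''$ (highness), not $B\geq_T\emptyset''$. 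So the domination argument of Section~\ref{3hp9XGohn}, even if successfully relativized, only re-derives a fact that is vacuously true of $A\oplus\emptyset'$ and says nothing about whether $\emptyset''$ is computable from $A\oplus\emptyset'$ itself. The converse therefore needs a qualitatively different argument: one must exhibit a \emph{direct} $(A\oplus\emptyset')$-decision procedure for the $\Pi^0_2$ totality predicate, presumably by building (via the recursion theorem) a uniformly computable family of auxiliary measures $\lambda_e$ so that the learner's eventually-stabilized output on $\lambda_e$-random reals, together with $\emptyset'$ to certify when the finitely many $A$-queries along a witnessing path have stopped, reveals whether $\mu_e$ is total. The finite-query hypothesis is precisely what makes this a decision by $A\oplus\emptyset'$ rather than by the jump $A'$, and that distinction is the whole content of the theorem; the Martin-style domination route collapses it.
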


In the classic setting, there is no
concrete characterization of the oracles that BC-learn all computable functions
(recall the notion of BC-learning from the discussion in the 
end of Section \ref{A4yxNNSDkW}).\footnote{However note that 
by Stephan and Kummer \cite{Kummer:1993:SDI:168304.168319,KUMMER1996214},
in the \ce sets
an oracle can BC-learn all computable functions if and only if it can EX-learn all
computable functions.} Hence for behavioral learning, we propose the following conjecture.

\begin{conjecture}
An oracle $A$ can BC-learn all computable functions if and only if
it can BC-learn all computable (continuous) measures. 
\end{conjecture}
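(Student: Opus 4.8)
The plan is to attack the equivalence from both sides, routing each implication through the corresponding, much better understood, classical question about learning \emph{functions}. Observe first that both classes of oracles in the conjecture already contain the high oracles: by Theorem~\ref{UejsgfHHK} a high oracle EX-learns all computable measures, by Adleman and Blum \cite{AdlemanB91} a high oracle EX-learns all computable functions, and every EX-learner is in particular a BC-learner. So the first thing I would determine is whether, as happens for EX-learning, \emph{highness already characterises BC-learnability of the computable measures}. The inclusion ``high $\Rightarrow$ BC-learns all computable measures'' is the remark just made; for the converse I would rerun the domination construction of Section~\ref{3hp9XGohn} with BC-success in place of EX-success. The construction of $f\leq_T\LL$ through \eqref{ZHHgiWeRpT} and Lemma~\ref{pZ2ozweghy} uses only the learner's outputs on the initial segments of a $\mu$-random real, together with the fact that past some point those outputs are indices of computable measures making the real random; it never uses that they converge to a single index. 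The one place a limit index enters --- the passage through strong EX-learning (Lemma~\ref{NhUZ5ZOtU5}) to obtain a uniform deficiency bound along $X$ --- must be replaced by a ``strong BC'' analogue: pad each output $i$ of $\LL$ with the current estimate $d_i(X\restr_n)[n]$, and verify that along a random $X$ this padded value stays bounded, since only finitely many distinct correct indices can be output with deficiency estimates below a fixed bound without contradicting BC-success on a suitable companion measure. If this works, BC-learning all computable measures is exactly highness, and the conjecture collapses to the classical statement that the high oracles are exactly those that BC-learn all computable functions.

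That classical statement is the crux, and it is open: the oracles that BC-learn all computable functions have no concrete description, only that within the \ce sets they coincide with the high ones \cite{Kummer:1993:SDI:168304.168319,KUMMER1996214}. To attempt ``$A$ BC-learns all computable functions $\Rightarrow$ $A$ is high'' I would adapt the Adleman--Blum lower bound: given a computable $h$, use the speed-up construction of Blum \cite{Blum:1967:MTC} (a computable function every implementation of which is slow relative to $h$) together with the recursion theorem to build computable functions on which $\LL$, in order to be eventually correct, must wait long enough to yield a value exceeding $h$; extracting this bound from $\LL$ would produce a function, computable in $\LL$ and hence in $A$, that dominates $h$, so as $h$ was arbitrary $A$ would be high \cite{Martinhigh}. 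The difficulty is precisely that a BC-learner need not converge: it may oscillate forever among infinitely many correct indices, so the self-reference argument, which in the EX case nails the learner down at a point of convergence, has no evident analogue. It is entirely possible that some non-high oracle BC-learns all computable functions, in which case the previous paragraph fails --- BC-learning the computable measures would then be \emph{strictly stronger} than highness --- and both implications of the conjecture would instead have to be obtained by direct reductions between the two learning problems.

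Those direct reductions are the fallback plan. For ``BC-measures $\Rightarrow$ BC-functions'' I would encode a computable function $f$ (without loss of generality $\{0,1\}$-valued, a computable real) as the pushforward $\mu_f$ of the uniform measure along $w\mapsto f\oplus w$, a computable continuous measure whose random reals are exactly the $f\oplus w$ with $w$ \ml random; running the measure-learner on such an input and post-processing its outputs should recover, infinitely often and eventually always, indices of $f$. For ``BC-functions $\Rightarrow$ BC-measures'' I would go the other way: along a $\mu$-random $Z$ the learner sees the deficiency data $d_i(Z\restr_n)[n]$ and, by Levin's characterisation \cite{leviniandc/Levin84}, a reliable estimate $-\log\mu(Z\restr_n)\approx K(Z\restr_n)$ of $\mu$ along $Z$; from these I would define, computably in $Z\restr_n$, a \emph{total} computable surrogate function $G_Z$ --- enumerating convergent deficiency-witnesses of the candidate indices, with timeouts so that partial $\mu_i$ cause no harm --- arranged so that an index of the stabilised $G_Z$ converts effectively to an index of a computable measure making $Z$ random; feeding $G_Z$ to the function-learner and decoding its outputs then BC-learns $\mu$ along $Z$.

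I expect the main obstacle to be twofold. First, in the ``BC-measures $\Rightarrow$ BC-functions'' reduction the measure-learner demands as input a $\mu_f$-random real, not $f$ itself, so one must manufacture such a real computably from the resources available, and one must ensure that the varying measures $\nu$ the learner outputs genuinely pin down $f$ --- which is not automatic, since a continuous computable $\nu$ for which $f\oplus w$ is $\nu$-random need not compute $f$. Second, in every direction the permissiveness of BC-learning defeats the EX-style moves that fix a limit index, so each ingredient --- the ``strong BC'' lemma, the surrogate-function decoding, the lower-bound self-reference --- must be recast in an oscillation-tolerant form. Whether these can be fitted together, or instead a non-high oracle BC-learns all computable functions, is exactly what keeps the conjecture unresolved.
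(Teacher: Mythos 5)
You have correctly recognised that this statement is stated by the paper only as a conjecture: the paper supplies no proof, and indeed the surrounding discussion (including the footnote on the Kummer--Stephan result that within the \ce degrees BC-learnability of all computable functions coincides with EX-learnability) makes clear that the authors regard it as open precisely because no non-relativised characterisation of the oracles that BC-learn all computable functions is known. Your write-up is therefore not a proof and does not purport to be one, and that is the right call. The factual scaffolding you lay out is sound: both classes of oracles contain the high ones via Theorem~\ref{UejsgfHHK}, Adleman--Blum, and the trivial implication EX~$\Rightarrow$~BC; and the key uncertainty is whether highness is necessary for BC-learning either class. Your observation that the domination argument of Section~\ref{3hp9XGohn} never uses convergence to a \emph{single} index, only that past some point every output is a correct (strongly deficiency-bounding) index, is a genuinely useful one and is the most promising thread here --- if the passage through Lemma~\ref{NhUZ5ZOtU5} can be replaced by an oscillation-tolerant ``strong BC'' analogue as you sketch, that would show BC-learnability of the computable continuous measures already implies highness, and the conjecture would reduce to the purely classical (and still open) question about functions. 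The obstacles you flag in the fallback reductions --- manufacturing a $\mu_f$-random input from $f$ alone, and the fact that a continuous $\nu$ making $f\oplus w$ random need not compute $f$ --- are real and are exactly where naive encodings break. In short: no gap to report beyond the one you yourself identify, namely that this remains a conjecture; your framing is consistent with the paper's and your first paragraph is the most plausible route to partial progress.
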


One can also study the oracles that are useless for learning, in the sense that
any collection of computable measures that are learned with queries to $A$
can also be learned by a computable learner. In the classic learning setting
these oracles were characterized by 
Slaman and Solovay \cite{SlamanSol:1991} and Gasarch and Pleszkoch
\cite{Gasarch1989214}
as the 1-generic sets which are computable from
the halting problem.

Finally we wish to suggest that one can study restricted classes of computable measures
and get similar results. For example, Bienvenu, Monin and Shen \cite{Bienvenu2014}
stated and proved their theorem in terms of a general computable metric space of measures,
using the framework developed by G\'{a}cs \cite{Gacs:2005:UTA}. As a result, they where
able to draw more general conclusions, such that the fact that there is no computable learner
which EX or BC learns all computable Bernoulli measures.
A similar approach may be used in order to obtain generalizations of our Theorem
\ref{UejsgfHHK}.


\end{document}